\patchcmd{\epigraph}{\@epitext{#1}}{\itshape\@epitext{#1}}{}{}
\newtheorem{prop}{Proposition}
\newenvironment{newchanges}{\color{black}}{}
\definecolor{kleinblue}{RGB}{0, 47, 167}
\definecolor{airforceblue}{rgb}{0.36, 0.54, 0.66}
\definecolor{finalblue}{RGB}{108, 142, 191}
\definecolor{igreen}{HTML}{228B22}
\definecolor{defcolor}{HTML}{f9f9f4} %
\definecolor{framecolor}{HTML}{d3d3d3} %
\definecolor{bordercolor}{HTML}{a6a6a6} %
\definecolor{defcolor}{HTML}{EFF6FC}    %
\definecolor{framecolor}{HTML}{A9CCE3}  %
\definecolor{bordercolor}{HTML}{2980B9} %
\definecolor{defcolor}{HTML}{f5f8fa} %
\definecolor{framecolor}{HTML}{c7d5df} %
\definecolor{bordercolor}{HTML}{5a7d95} %
\definecolor{darkblue}{rgb}{0, 0, 0.5}
\title{The Illusion of Diminishing Returns:\\Measuring Long Horizon Execution in LLMs}
\author{\noindent
Akshit Sinha$^{1*}$ \quad Arvindh Arun$^{2*}$ \quad Shashwat Goel$^{3,4*}$\\
\textbf{Steffen Staab}$^{2,5}$ \quad \textbf{Jonas Geiping}$^{3,4,6}$\vspace{1em}\\
\normalsize  $^1$University of Cambridge \quad $^2$Institute for AI, University of Stuttgart\\$^3$Max Planck Institute for Intelligent Systems \quad $^4$ELLIS Institute Tübingen \normalsize\\ $^5$University of Southampton $^6$Tübingen AI Center \\
\normalsize $^{*}$Equal contribution
}
\begin{document}

\maketitle 
\vspace*{-2em}
\begin{center}
    {\Large
       \raisebox{-0.2ex}{\faGithub}\hspace{1mm} 
    }
    {\large   
       \href{https://github.com/long-horizon-execution/measuring-execution}{\texttt{Code}} \hspace{3cm} \raisebox{-0.2ex}{\faDatabase}\hspace{3mm}\href{https://huggingface.co/datasets/arvindh75/Long-Horizon-Execution}{\texttt{Dataset}} 
    }
\end{center}

\vspace*{1em}

\begin{abstract}
Does continued scaling of large language models (LLMs) yield diminishing returns? In this work, we show that short-task benchmarks may give an illusion of slowing progress, as even marginal gains in single-step accuracy can compound into exponential improvements in the length of tasks a model can successfully complete. Then, we argue that failures of LLMs when simple tasks are made longer arise from mistakes in \textit{execution}, rather than an inability to \textit{reason}. So, we propose isolating \textit{execution} capability, by explicitly providing the \textit{knowledge} and \textit{plan} needed to solve a long-horizon task. First, we find that larger models can correctly execute significantly more turns even when small models have near-perfect single-turn accuracy. We then observe that the per-step accuracy of models degrades as the number of steps increases. This is not just due to long-context limitations---curiously, we observe a \textit{self-conditioning} effect---models become more likely to make mistakes when the context contains their errors from prior turns. Self-conditioning does not reduce by just scaling the model size. But, we find that \textit{thinking} mitigates self-conditioning, and also enables execution of much longer tasks in a single turn. We conclude by benchmarking frontier thinking models on the length of tasks they can execute in a single turn. Overall, by focusing on the ability to execute, we hope to reconcile debates on how LLMs can solve complex reasoning problems yet fail at simple tasks when made longer, and highlight the massive benefits of scaling model size and sequential test-time compute for long-horizon tasks.
\end{abstract}

\section{Introduction}

Is continued scaling of compute for Large Language Models (LLMs) economically justified given diminishing marginal gains? This question lies at the heart of the ongoing debate on the viability of continued massive investments in LLMs. While scaling laws show diminishing returns on metrics like test loss, the true economic potential of LLMs might arise from automating long, multi-step tasks~\citep{metr2025}. However, long-horizon tasks have been the Achilles' heel of Deep Learning. 
We see recent vision models generating impressive images, yet consistency over long videos remains an unsolved challenge. As the industry races to build agents that tackle entire projects, not just isolated questions, a fundamental question arises: 

\begin{quote}
\textit{How can we measure the number of steps an LLM can reliably execute?}
\end{quote}

\begin{figure}[t]
    \centering
    \includegraphics[width=\linewidth]{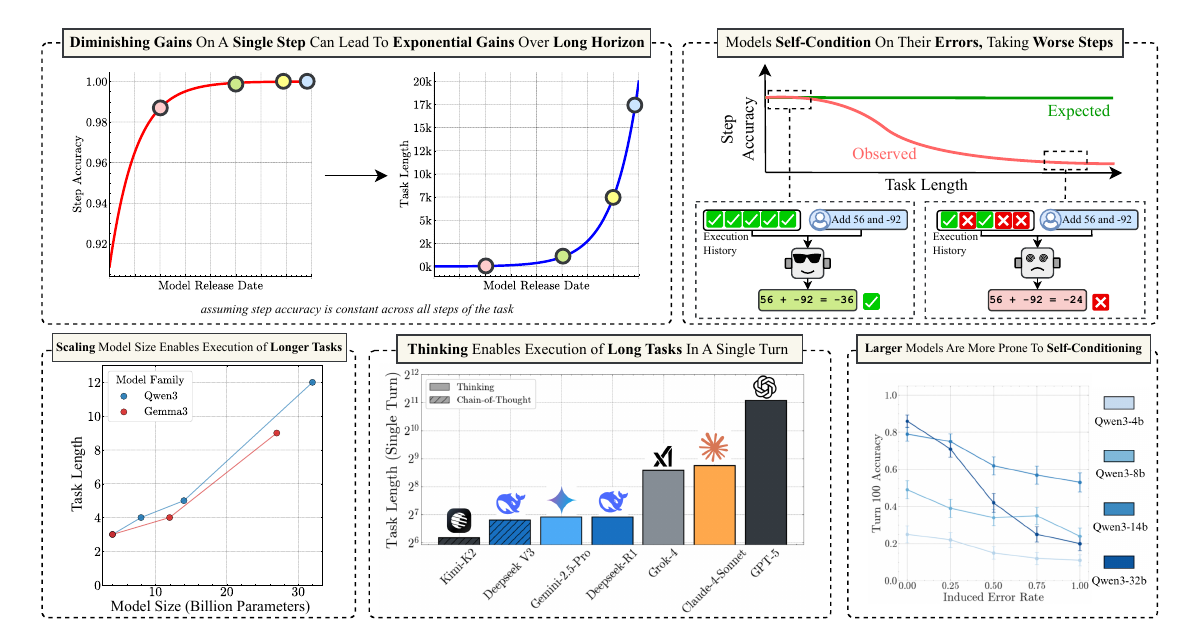}
    \caption{\looseness=-1 \textbf{A summary of our contributions.} Our work measures long-horizon execution, finding large benefits from scaling model size and sequential test-time compute. We identify a failure mode where models self-condition on their own errors, degrading future performance. }
    \label{fig:fig1}
    \vspace{-0.2in}
\end{figure}

\looseness=-1 LLM failures on simple, but long tasks have been considered a fundamental inability to \textit{reason}~\citep{mirzadeh2024gsm}. Despite massive improvements on complex reasoning benchmarks,~\citet{shojaee2025illusionthinkingunderstandingstrengths} claim \textit{thinking models}~\citep{guo2025deepseek} only give an ``illusion of thinking'', as they eventually fail when the task is made longer.
These results have sparked much debate in the community, which we think can be resolved by decoupling the need for \textit{planning} and \textit{execution} in reasoning or agentic tasks. \textit{Planning} involves deciding what information to retrieve or tools to use and in which order, while \textit{execution} involves carrying out the plan. In \citet{shojaee2025illusionthinkingunderstandingstrengths} setup, the LLMs know the correct plan, as they initially follow it correctly for many steps. We posit that the eventual failures are in execution---as the task gets longer, the model is more likely to make a mistake in executing the plan. Although much attention has been paid to LLM planning abilities~\citep{kambhampati2024llms}, execution remains an understudied challenge, despite being increasingly important as LLMs begin to be used for long reasoning and agentic tasks. 

In this work, we measure \textit{long-horizon execution} capabilities of LLMs in a controlled setting. We isolate the \textit{execution} capability of LLMs by explicitly providing them the \textit{knowledge} and \textit{plan} needed. By controlling the number of turns, and the number of steps per turn, which together contribute to task length, we reveal insights about long-horizon execution in LLMs:

\looseness=-1 \textbf{Does Scaling have Diminishing Returns?} We observe that diminishing improvements in single-step accuracy can compound, leading to exponential growth in the length of task a model can complete. Traditionally, scaling model size is assumed to increase capacity to store parametric knowledge or search for plans. Yet, even when the required knowledge and plan are explicitly provided, we find that scaling model size leads to large improvements in the number of turns a model can execute successfully. 

\looseness -1 \textbf{The Self-Conditioning Effect.} One might assume that failures on long tasks are simply due to the compounding of a small, constant per-step error rate and context length issues. However, we find that the per-step error rate itself rises as the task progresses. This is in contrast to humans, who typically improve at executing a task with practice. We hypothesize that, as a significant fraction of model training is to predict the most likely next token given its context, conditioning models on their own error-prone history increases the likelihood of future errors. We test this by controlling the error rate in the history provided to the model. As the error rate in the history is increased, we observe a sharp degradation in subsequent step accuracy, validating that models \textit{self-condition}. We show how self-conditioning leads to degradation in model performance in long-horizon tasks beyond previously identified long-context issues, and unlike the latter, is not mitigated by scaling model size.

\looseness=-1 \textbf{The Impact of Thinking.} We find that recent thinking models are not affected by prior mistakes, fixing the self-conditioning effect. Further, sequential test time compute also significantly improves the length of task a model can complete in a single turn. Where, without chain-of-thought prompting, even frontier LLMs like DeepSeek-V3 fail at performing even four steps of execution, and its thinking version R1 can execute over 100 steps, highlighting the importance of reasoning before acting~\citep{yao2023react}. We benchmark frontier thinking models, and find GPT-5 thinking (codenamed ``Horizon'') can execute over 2100 steps, far ahead of the next best competitor, Claude-4 Sonnet at 432. 

The ``jagged frontier''~\citep{dellacqua2023navigating} of LLM capabilities remains fascinating yet confusing. Unlike traditional machines, LLMs are more susceptible to failure when used for executing repetitive tasks. Thus, we argue that execution failures in long tasks should not be misinterpreted as the inability to reason. We show that long-horizon execution improves dramatically by scaling model size and sequential test time compute. If the length of tasks a model can complete indicates its economic value, continued investment in scaling compute might be worth the cost, even if short-task benchmarks give the illusion of slowing progress.

\section{Formulation}

First, we define key capabilities involved in an agentic or reasoning task. As a motivating example, consider an agent for the task of booking flights. Upon receiving a search result, the agent must reason to choose a flight that aligns with user preferences. One plan for this reasoning task could be: 

For each flight, verify the flight timings, baggage allowance, and airline reviews. Then apply any available discounts or reward programs, and finally select a flight based on cost and travel time. 

Each of these individual steps requires two operations: \textit{retrieving} some information, and \textit{composing} it with the existing information state, until the goal of choosing the final flight is reached. Both these operations require \textit{knowledge}, potentially tacit, about how to perform them. \textit{Execution} is carrying out this plan, a sequence of \textit{retrieve-then-compose} steps, until a final booking is made. We formalize these terms for our work as follows:
\begin{tcolorbox}[
  enhanced, breakable,
  colback=defcolor, colframe=framecolor,
  boxrule=0.4pt, arc=1.5mm,
  borderline west={3pt}{0pt}{bordercolor},
  left=1em, right=1em, top=0.6em, bottom=0.6em,
  title=\textbf{Key Terms}, coltitle=black, fonttitle=\bfseries
]
\begin{description}
  \item[\textbf{Planning.}] Deciding what steps to take, and in what sequence.
  \item[\textbf{Execution.}] Carrying out the steps decided in the plan.
  \item[\textbf{Knowledge.}] Information about different types of steps, and how to compose them.
  \item[\textbf{A reasoning task.}] Requires planning the steps needed to solve it, and then executing them.
  \item[\textbf{An agentic task.}] Requires planning what actions to take, and then executing them.
\end{description}
\end{tcolorbox}
In this work, we focus on \textit{execution}, as we argue that it is a critical component of long-horizon capabilities. Execution has traditionally received less attention~\citep{stechly2024chain} than capabilities such as reasoning, planning, and world knowledge, which have been the primary focus of LLM capability discussions. In fact, failures in execution have been misattributed to limitations in reasoning or planning capabilities~\citep{shojaee2025illusionthinkingunderstandingstrengths, khan2025comment}. This perception may stem from the view that execution is straightforward or mundane. For example, once we as humans learn how to do a task, we are quite reliable at executing it, even improving with practice. However, as LLMs do not come with correctness guarantees, we posit that just execution over a long horizon can surprisingly be a challenge. We hypothesize that:
\vspace{-0.5em}
\begin{center}
    \textit{Even if planning and world knowledge are perfected,\\LLMs will still make mistakes in execution over a long-horizon.}
\end{center}
In an agentic or reasoning task, the model begins in an initial state (based on the first input) and has to perform a sequence of steps to reach the final goal. A long-horizon task requires a large number of steps, with the task length being the number of steps needed to complete it. We define the following metrics to evaluate performance:
\begin{tcolorbox}[
  enhanced, breakable,
  colback=defcolor, colframe=framecolor,
  boxrule=0.4pt, arc=1.5mm,
  borderline west={3pt}{0pt}{bordercolor},
  left=1em, right=1em, top=0.6em, bottom=0.6em,
  title=\textbf{Evaluation Metrics}, coltitle=black, fonttitle=\bfseries
]
\begin{description}
  \item[\textbf{Step Accuracy.}] Measures the fraction of samples where the state update from step $i-1$ to step $i$ is correct, regardless of the correctness of the model's state at step $i-1$.
  \looseness=-1 \item[\textbf{Turn Accuracy.}] A turn is a single interaction with the model, which may require executing multiple steps. Turn Accuracy measures the fraction of samples where the state update from turn $t-1$ to turn $t$ is correct, regardless of the correctness of the model's state at turn $t-1$.
  \item[\textbf{Turn Complexity ($K$).}] Defined as the number of steps the model has to execute per turn.
  \item[\textbf{Task Accuracy.}] Measures the fraction of samples in which the model can complete a task of $i$ steps without making any mistakes in the process.
  \looseness=-1 \item[\textbf{Horizon Length ($H_{s}$).}] Given a success rate threshold $0 \leq s \leq 1$, the horizon length is the first step $i$ where the model's mean task accuracy across samples drops below $s$. It can be interpreted as: the model can perform a task of length $H_s$ without making mistakes, with probability $s$. We use $s=0.5$ unless otherwise specified, like \citet{kwa2025measuring}.
\end{description}
\end{tcolorbox}
\subsection{Diminishing Returns in Step Accuracy Compound Over a Long Horizon}
\label{sec:theory}

We begin by analyzing the relationship between a model's single-step accuracy and its horizon length. Note that this analysis applies not just to execution, but rather to any general long-horizon task. To obtain a mathematical relation, we make two simplifying assumptions similar to~\citet{lecun2023-nyuphil}. First, we assume a model's step accuracy remains independent and constant over the task. Second, we assume a model does not self-correct, meaning any single error leads to task failure. We assume this only for the analysis here, which is illustrative and provides useful intuition. Our subsequent empirical analysis goes beyond this, investigating how LLMs, in fact, do not exhibit constant step accuracy for long-horizon execution, and may also correct their mistakes. \begin{newchanges}
Intuitively, if the model succeeds at one step with probability $p$, it's success probability after $t$ steps is $p^{t}$. This is often negatively viewed as errors compounding over a long task. But conversely, it also implies that a small improvement in step accuracy can lead to a large increase in horizon length after a certain (high) threshold of single-step accuracy is achieved.
\end{newchanges}

\begin{wrapfigure}{r}{0.4\linewidth}
    \vspace{-0.75in}
    \centering
    \includegraphics[width=0.85\linewidth]{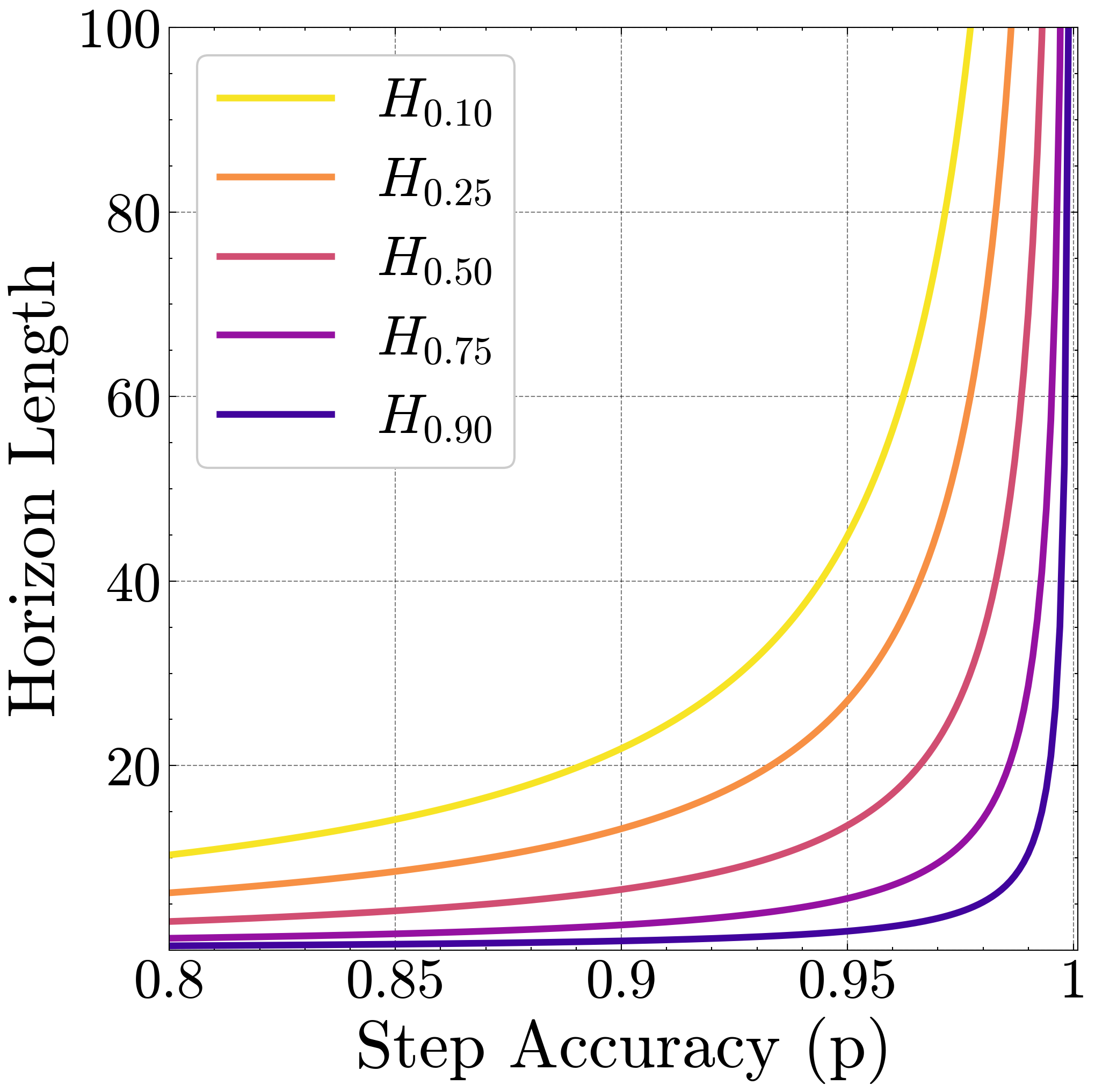}
    \caption{\textbf{Growth of Horizon Length.} The length of task a model can perform grows hyperbolically in the high accuracy regime.}
    \label{fig:math_plot}
    \vspace{-0.1in}
\end{wrapfigure}

\begin{prop}
\label{lem:horizon_length}
Assuming an independent and constant step accuracy $p$ and no self-correction, the task-length $H$ at which a model achieves a success rate $s$ is given by:
\begin{equation*}
\hspace{-2cm}
H_s(p) = \left\lceil\frac{\ln(s)}{\ln(p)}\right\rceil\ \approx \frac{\ln(s)}{\ln(p)}
\end{equation*}
\textit{(The derivation is provided in~\Cref{app:proof_lemma1}.)}
\end{prop}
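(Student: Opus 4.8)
The plan is to model task success as a product of independent per-step successes and invert the resulting expression. Under the two stated assumptions---constant step accuracy $p$ and no self-correction---a task of length $H$ is completed successfully exactly when all $H$ steps are executed correctly, so the task accuracy at length $H$ is $p^H$. This is the only modeling step that requires the assumptions; everything after is algebra.

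First I would set the defining equation for the horizon length. By the definition of $H_s$ given in the Evaluation Metrics box, $H_s(p)$ is the first step $i$ at which mean task accuracy drops below $s$; equivalently, it is the largest integer $H$ with $p^H \ge s$. Taking natural logarithms of $p^H \ge s$ and dividing by $\ln(p)$, which is negative for $p \in (0,1)$ and hence flips the inequality, gives $H \le \ln(s)/\ln(p)$. The largest such integer is $\lfloor \ln(s)/\ln(p) \rfloor$; the paper's statement writes $\lceil \ln(s)/\ln(p) \rceil$, which coincides whenever $\ln(s)/\ln(p)$ is not an integer and differs by at most one otherwise---I would note that this off-by-one in the ceiling-versus-floor convention is immaterial since we only care about the approximation $H_s(p) \approx \ln(s)/\ln(p)$, and I would simply adopt the convention the statement uses.

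Then I would justify the approximation $H_s(p) \approx \ln(s)/\ln(p)$ and the ``hyperbolic growth'' claim from the figure caption. Dropping the ceiling introduces an error of at most $1$, which is negligible in the high-accuracy regime where $\ln(s)/\ln(p)$ is large. To expose the growth rate, write $p = 1 - \varepsilon$ and expand $\ln(p) = \ln(1-\varepsilon) = -\varepsilon - \varepsilon^2/2 - \cdots$, so that for small $\varepsilon$ we get $H_s(p) \approx -\ln(s)/\varepsilon = \ln(1/s)/(1-p)$. This shows the horizon length scales like $1/(1-p)$ as $p \to 1$, i.e. hyperbolically in the single-step error rate---so marginal gains in $p$ near $1$ produce unbounded gains in $H_s$, which is the qualitative point the proposition is meant to support.

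I do not anticipate a genuine obstacle here; the result is elementary. The only subtlety worth flagging explicitly in the write-up is the direction of the inequality when dividing by $\ln(p) < 0$, and the floor/ceiling bookkeeping, both of which I would dispatch in a sentence. The substantive content is entirely in the two modeling assumptions (constant $p$, no self-correction), which are stated as hypotheses and which the paper immediately goes on to scrutinize empirically; the proof itself is just the observation that independence turns ``complete all $H$ steps'' into $p^H$ and that inverting $p^H = s$ is a one-line logarithm computation.
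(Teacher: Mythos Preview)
Your proposal is correct and matches the paper's own derivation: model task success as $p^H$, set this equal to $s$, and solve via logarithms. You are in fact more careful than the paper about the sign of $\ln(p)$ and the floor/ceiling bookkeeping, and your asymptotic expansion $H_s \approx \ln(1/s)/(1-p)$ is exactly the ``Near-Perfect Accuracy Regime'' analysis the appendix presents immediately after the proof.
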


This shows that the horizon length grows hyperbolically with the step accuracy. We illustrate this growth in \Cref{fig:math_plot} across different values for the success rate $s$. Notice the sharp growth in horizon length beyond 80\% single-step accuracy, performance that frontier models now achieve on many question-answering benchmarks~\citep{vendrow2025large}, which can be considered short tasks.

We note that human labor is often compensated for its time. If the economic value of an agent also arises from the length of tasks it can complete, single-turn or short task benchmarks may be misleading for evaluating the benefits of further investment in LLM compute. While these benchmarks reveal genuine diminishing returns at the step level, they understate the compounding benefits that emerge over long-horizons. Beyond a threshold, small improvements in step accuracy can translate into success at rapidly increasing task lengths, which may provide a more faithful indicator of economic value.

For example, in METR's horizon length plot on software engineering tasks \citep{kwa2025measuring}, it was empirically observed that the horizon length at $s=0.5$ of frontier models is growing exponentially, doubling every 7 months. Using our result above, in ~\Cref{fig:fig1} we show that such exponential growth in horizon length occurs even in a regime of diminishing returns on step accuracy. If we set $s=0.5$, we obtain $H_{0.5} = -\frac{\ln(2)}{\ln(p)}$. As such, the step-accuracy $p$ required to sustain exponential growth in $H_{0.5}$ over time ($t$) is $2^{\frac{-1}{2^t}}$, which is indeed a diminishing function.

\subsection{Isolating execution by decoupling planning and knowledge}

We now describe how we measure long-horizon execution empirically. We isolate execution failures by explicitly providing the requisite knowledge and plan. We study the chaining of the \textit{retrieve-then-compose} step motivated in the flight-selection agent example earlier. Each step involves \textit{retrieving} relevant information or a tool specified by the plan and then \textit{composing} its output to update the current state. The plan is deciding what to retrieve and how to compose it, whereas execution is actually performing those operations. This fits a natural abstraction---a key-value dictionary. The \textit{key} serves as one step of a plan specifying what knowledge to retrieve, or tool to call, while the \textit{value} represents the knowledge or tool output, which then has to be composed with the current state. In our study, we provide the plan as the keys in each query, eliminating the need for \textit{planning} abilities from the LLM. We also provide the key-value dictionary in context, removing any dependency on the model's parametric \textit{knowledge}. With this design, we directly control two important axes that multiply to obtain the task length (number of retrieve-then-compose steps): the number of turns, and the turn complexity ($K$). The turn complexity can be varied by changing the number of keys queried per turn. 

\begin{figure}[t]
    \centering
    \includegraphics[width=\linewidth]{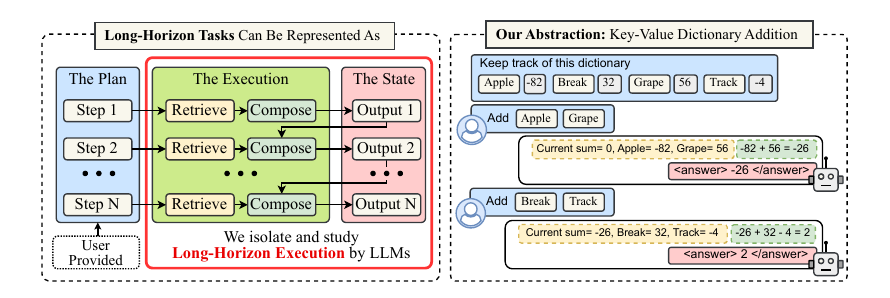}
    \caption{\textbf{Overview of our framework.} (Left) Our framework models long-horizon tasks as a sequence of \textit{retrieve-then-compose} steps. (Right) We design a simple task that decouples planning from execution: in each turn, we provide the model the plan as key(s), asking it to \textit{retrieve} their value(s), and \textit{compose} them to maintain a running sum. We control the number of turns and turn complexity (keys per query).}
    \label{fig:framework}
    \vspace{-0.15in}
\end{figure}

\section{Experiments}
\label{sec:experiments}

We design a simple task where even language models with 4 billion parameters can achieve high accuracy, to isolate the capability of \textit{long-horizon execution}.

\textbf{Setup.} As illustrated in~\Cref{fig:framework}, we provide the model with the needed \textit{knowledge}, a fixed, in-context dictionary $\mathcal{D}: \mathcal{V} \to \mathbb{Z}$, where $\mathcal{V}$ is a vocabulary of common five-letter English words and values are integers sampled uniformly from $[-99, 99]$. The initial state is $S_0 = 0$. In turn $t \in \{1, \dots, T\}$, the model receives an explicit \textit{plan} $P_t = \{k_{t,1}, \dots, k_{t,K}\}$, which is a set of $K$ keys sampled from $\mathcal{V}$. For each turn $t$, the model must execute this plan, which requires updating the state, $S_t$, to maintain a running sum of values for all past queried keys. This requires the \textit{retrieve-then-compose} steps:
\begin{enumerate}[leftmargin=2em]
    \item \textbf{Retrieval:} Look up the integer value $\mathcal{D}[k]$ for each key $k \in P_t$
    \item \textbf{Composition:} Sum these values and add them to the previous state, $S_t = S_{t-1} + \sum_{i=1}^{K} \mathcal{D}[k_{t,i}]$
\end{enumerate}

\looseness=-1 We choose short English words and two-digit integers to minimize errors arising from tokenization. We provide few-shot examples to clarify the task. More details, including the exact prompt, are provided in \Cref{sec:exp-setup}. We also disentangle performance on the individual retrieval and composition operations, finding that models have much higher accuracies on each of them alone (\Cref{sec:breaking-down}).

\subsection{Effect of increasing the number of turns}
\label{sec:inc-turns}

\looseness=-1 We first test our hypothesis that long-horizon execution can be challenging even when a model has the required knowledge and planning ability, and then study the benefits of scaling model size.

\textbf{Setup.} 
We evaluate the Qwen3~\citep{yang2025qwen3technicalreport} and Gemma3~\citep{gemmateam2025gemma3technicalreport} model families, as they offer a range of sizes: [4, 8, 14, 32]B and [4, 12, 27]B parameters, respectively. For this experiment, we set the turn complexity to its simplest form ($K=1$), providing a single key per turn, and vary the number of turns. Models are instructed to output the final answer directly, without intermediate thinking tokens, with the format enforced via few-shot examples. We verify that format-following errors are not the primary failure mode (\Cref{sec:format-following}). We also show that the results below hold with chain-of-thought (CoT) prompting, and thinking models (Appendix \Cref{fig:scaling-trends-think}), and the trends are not affected by the temperature used (Appendix \Cref{fig:temp-zero}).

\looseness=-1 \textbf{Result 1: Execution alone is challenging.} As seen in \Cref{fig:scaling_trends} (a), all models except Gemma3-4B and Qwen3-4B achieve near-perfect accuracy on the first step, confirming they have the knowledge required to perfectly do a single step of our task. Yet, task accuracy falls rapidly over subsequent turns (\Cref{fig:scaling_trends} (c)). Even the best-performing model (Qwen3-32B) sees its accuracy fall below 50\% within 15 turns. This confirms our hypothesis that long-horizon execution can be challenging for LLMs, even if they have the needed knowledge and plan.

\textbf{Result 2: Non-diminishing benefits of scaling model size.} As shown in \Cref{fig:scaling_trends} (c), larger models sustain higher task accuracy for significantly more turns, resulting in a clear scaling trend for horizon length (\Cref{fig:scaling_trends} (b)). We abstain from deriving a ``scaling law'' since we can only obtain at most four model sizes from the same family, but the improvements do not seem diminishing.  This observation is non-trivial. While the benefits of increasing model size are often attributed to improved capacity for knowledge, our task is not knowledge-constrained, as models achieve near-perfect first step accuracy (\Cref{fig:scaling_trends} (a)), nor is the task more complex so that a larger model would be required. Yet, larger models are clearly more reliable at executing the task for longer. A possible explanation is the redundancy of internal circuits in larger models, which ensembles to reduce error~\citep{lindsey2025biology}. However, we find that simulating this redundancy with output-level aggregation of parallel compute does not replicate the gains observed from scaling model size (\Cref{sec:majvote}).

\begin{figure}[t]
    \centering
    \includegraphics[width=\linewidth]{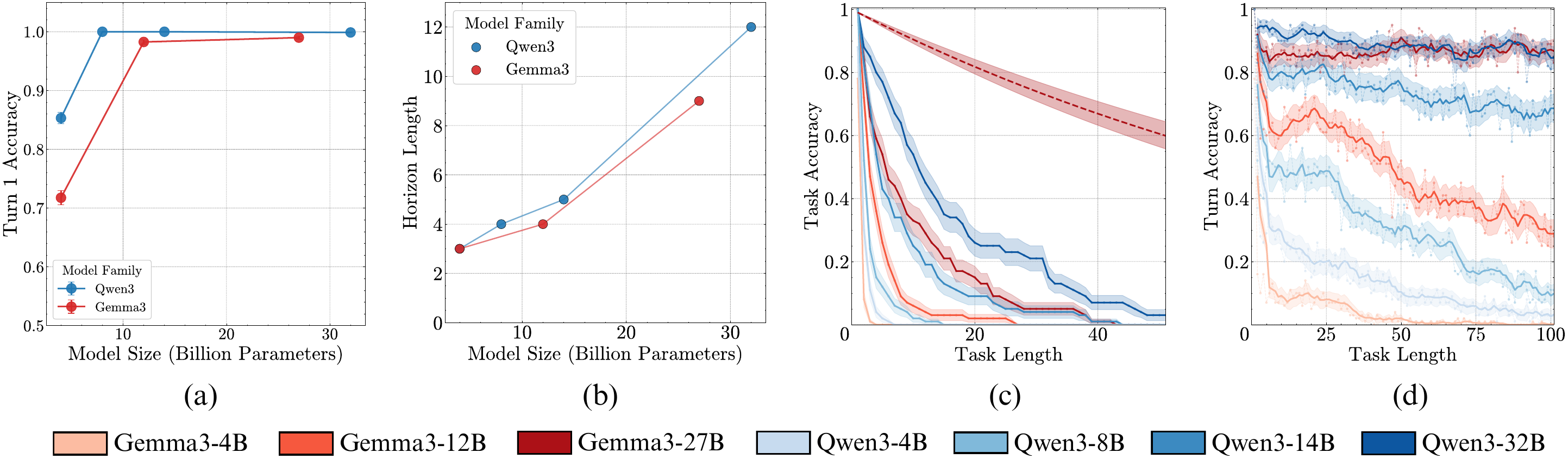}
    \caption{\textbf{Scaling model size has non-diminishing improvements in the number of turns it can execute.} The first-step accuracy for our task is near-perfect for all except the smallest models (a). Yet, as the model size is scaled, the horizon length increases significantly (b). We also see the effect of scaling in widening the gap between small and large models in task accuracy (c) and turn accuracy (d) as the number of turns increases. 
    The shaded region is the mean $\pm$ one standard deviation over 100 samples; the solid line is the moving average over 5 turns; the dotted line is a hypothetical baseline model with constant step-accuracy of 0.99.}
    \label{fig:scaling_trends}
    \vspace{-0.1in}
\end{figure}

\subsection{Why Does Turn Accuracy Degrade? The Self-Conditioning Effect}
\label{sec:context-manipulation}

\looseness=-1 One might expect a model's turn accuracy to remain constant. Yet, \Cref{fig:scaling_trends} (d) shows the accuracy of individual turns degrading as the number of turns increases. We investigate two competing hypotheses:

    \textbf{1. Degradation as the context length increases.} The model's performance degrades simply due to increasing context length~\citep{zhou2025gsm}, irrespective of its content.
    
    \textbf{2. Self-conditioning.} The model conditions on its own past mistakes. It becomes more likely to make a mistake after observing its own past errors in previous turns.

\looseness=-1 \textbf{Setup.} To disentangle these factors, we conduct a counterfactual experiment by manipulating the model's chat history. We control the error rate by injecting artificial output histories with a chosen error rate in the same format. If we fully \textit{heal} the history, with a $0\%$ error rate, degradation in the model's turn accuracy between turn 1 and a later turn can be attributed to long-context issues. If a model's accuracy for a fixed later turn consistently worsens with increasing error rate in prior turns, this would support our self-conditioning hypothesis.

\textbf{Result 3: Self-conditioning causes degradation in turn accuracy beyond long-context.} Our results in \Cref{fig:rq3} (a) show evidence for degradation due to both long-context and self-conditioning. When conditioned on an error-free history (Induced Error Rate = 0.00), model turn accuracy at turn 100 is below its initial value, consistent with prior observations of long-context degradation~\citep{zhou2025gsm}. More interestingly, as we increase the rate of injected errors into the context, accuracy at turn 100 consistently degrades further. This demonstrates the self-conditioning effect---as models make mistakes, they become more likely to make more mistakes, leading to a continuous degradation in per-turn accuracy throughout the output trajectory as shown in \Cref{fig:rq3} (b).

\textbf{Result 4: Unlike long-context, scaling model size does not mitigate self-conditioning.} At the error rate of 0\%, notice that the accuracy at turn 100 consistently improves as you scale model size. As shown in \Cref{fig:rq3} (b), scaling to frontier (200B+ parameter) models like Kimi-K2~\citep{kimiteam2025kimik2openagentic}, DeepSeek-V3~\citep{deepseekai2025deepseekv3technicalreport}, and Qwen3-235B-Instruct-2507~\citep{yang2025qwen3technicalreport} largely solves long-context degradation for up to 100 turns, achieving near-perfect accuracy on a healed history. However, even these large models remain susceptible to self-conditioning, as their performance consistently degrades as the induced error rate in their history increases.
\begin{newchanges}
A possible explanation for this is the relation between self-conditioning and in-context learning. In in-context learning, the model desirably conditions on user and environment inputs, for example, the correct few-shot demonstrations of the task we also provide in the initial prompt. In contrast, in \textit{self}-conditioning, the model conditions on its \textit{own} past behaviour, which could be wrong, leading to more mistakes. At a fundamental level, both arise from the model trying to output the most likely completion to its context, which larger models are known to be better at~\citep{arora2025bayesianscalinglawsincontext}. 
\end{newchanges} Self-conditioning is also consistent with results showing how hallucinations snowball~\citep{zhang2023languagemodelhallucinationssnowball}, and larger models shift more in personality during multi-turn conversations~\citep{choi2024examining, becker_stay_2025}, where in our case, the drift is toward a personality that makes errors.

\begin{figure}[t]
    \centering
    \includegraphics[width=\textwidth]{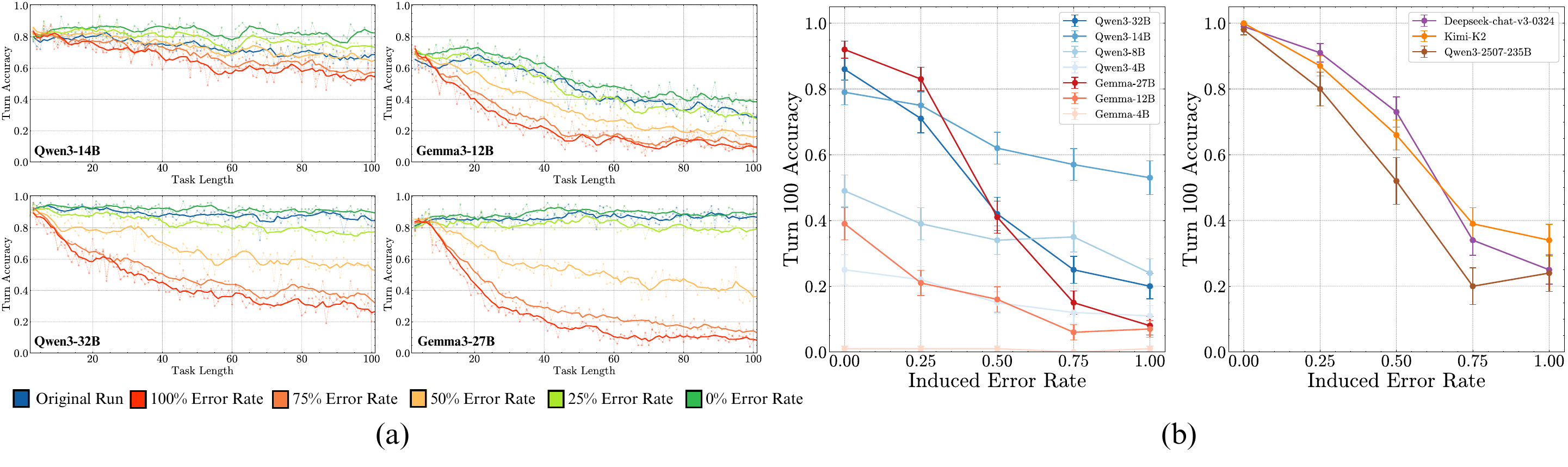}
    \vspace{-0.25in}
    \caption{\textbf{Models self-condition on their previous mistakes, leading to more mistakes in subsequent turns.} By manipulating the chat history, we counterfactually vary the fraction of errors in previous turns. We find this increases the likelihood of errors in future turns (left). This shows a source of degradation in turn-wise model accuracy beyond long-context, as in the turn 100 slice (right) model accuracies are much higher when we provide a fully correct history. Scaling model size increases self-conditioning, even for frontier non-thinking models.}
    \label{fig:rq3}
\end{figure}

\begin{wrapfigure}[16]{r}{0.5\linewidth}
    \vspace{-0.5cm}
    \centering
    \includegraphics[width=0.70\linewidth]{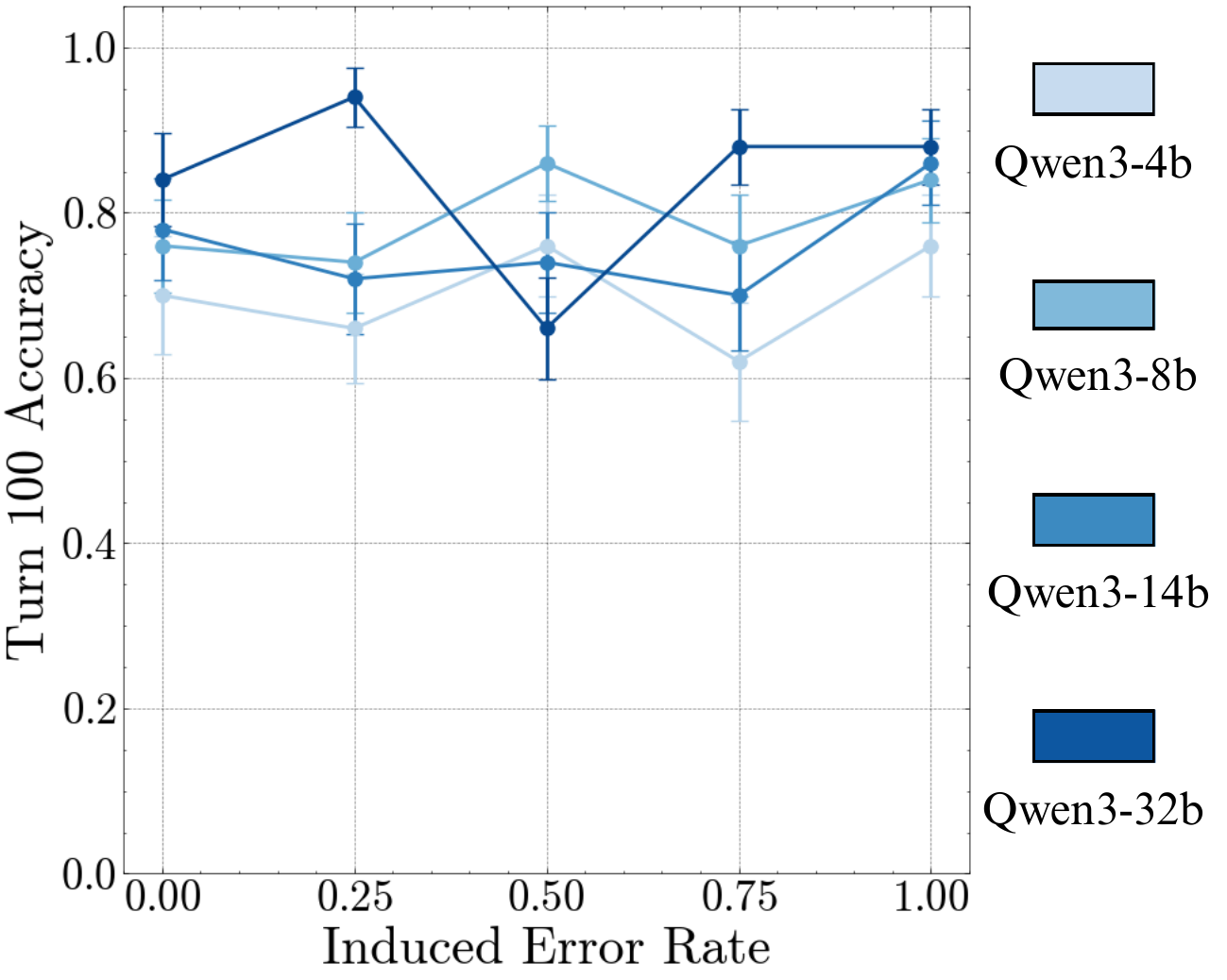}
    \vspace{-0.1cm}
    \caption{\textbf{Thinking fixes self-conditioning.} Qwen3 models with thinking enabled no longer self-condition, even when the entire prior history has wrong answers, in contrast to non-thinking results.}
    \label{fig:thinking-selfcond}
    \vspace{1cm}
\end{wrapfigure}

In \Cref{sec:cot-selfcond}, we try the above setup of output manipulations with CoT prompting, finding that accuracy still deteriorates as the induced error rate increases. A potential confounder is that the manipulated outputs deviate from the CoT. We try to mitigate this issue with programmatically generated CoT traces, but still observe self-conditioning. We also try removing CoT traces from previous turns from history, which causes the model to stop using CoT completely. So, we now present results for the Qwen3 thinking models, which are trained with reinforcement learning (RL) to think even when previous turn traces are not presented. As before, we observe their turn 100 accuracy while controlling the error rate in prior turns.

\textbf{Result 5: Thinking fixes self-conditioning.} In \Cref{fig:thinking-selfcond}, we observe that the Qwen3 thinking models do not self-condition---the accuracy of the models at turn 100 remains stable, regardless of the error rate in its context. This could arise from two reasons. First, RL training can reduce the most likely next token prediction behaviour of language models, making them oriented towards task success rather than continuing the context. Second, the removal of thinking traces from prior turns could reduce the influence of prior turns on the model's output, as it thinks about the new turn independently. 
By inspecting the models' thinking traces, we observe that they do not refer back to their answers in prior turns, which could be a potential reason why they do not self-condition. Inspired by this, for non-thinking models, we experiment with context engineering by explicitly removing prior history and find that it indeed mitigates self-conditioning (\Cref{sec:context-man}). We also find that just prompting models to self-verify their answers does not solve self-conditioning completely (\Cref{sec:self-verify}).

\subsection{What is the length of tasks models can complete in a single turn?}
\label{sec:thinking}

\looseness=-1 In the previous sections, we measured how many turns models can successfully execute a single retrieve-then-compose step. However, most real-world tasks require more complex processing every turn. In fact, in \Cref{sec:wc-vs-turn} we show that the horizon length of different models can vary significantly at different turn complexities. The total task length a model can handle is a function of both the number of turns and the number of steps to execute per turn. We now measure the latter dimension: the maximum number of steps a model can execute per turn.

\begin{figure}
    \centering
    \includegraphics[width=0.9\linewidth]{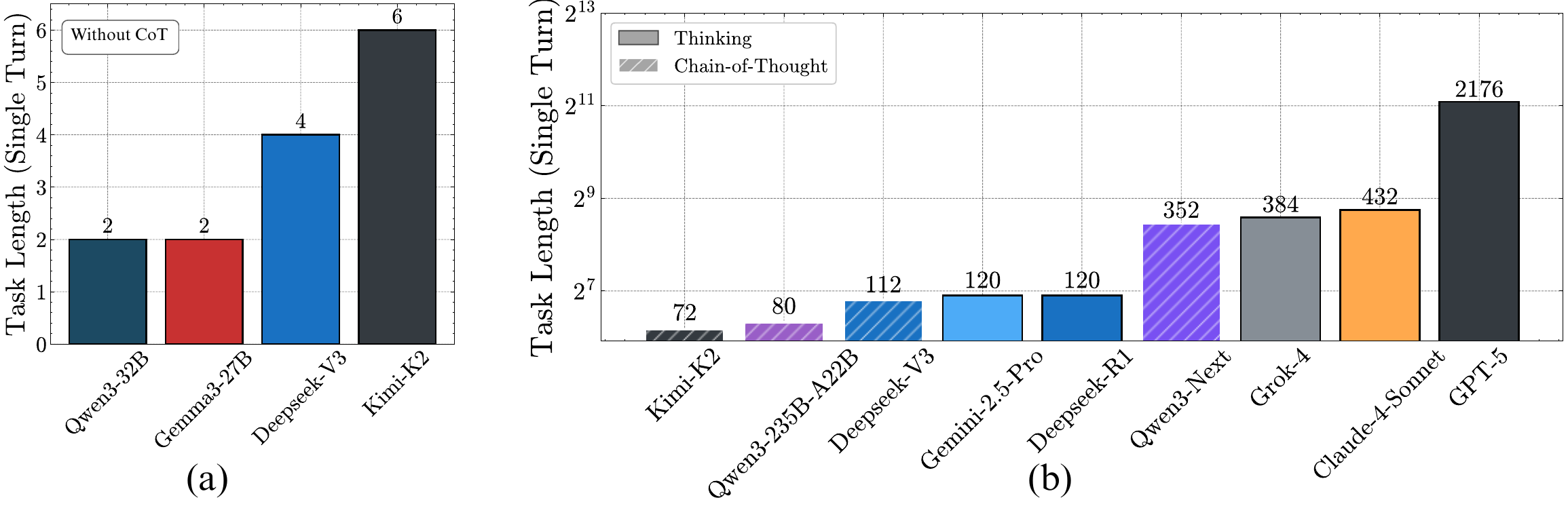}
    \caption{\looseness -1 \textbf{Benchmarking the length of task models can execute in a single turn.} Without CoT or thinking, even the biggest models fail to execute more than a few steps (a). Sequential test time compute (thinking tokens) significantly improves this, especially when trained with RL (b), where GPT-5 is far ahead of the rest.}
    \label{fig:benchmark}
    \vspace{-0.1in}
\end{figure}

\textbf{Setup.} 
\looseness=-1 To quantify the length of task models can complete in one go, without user input, we run a binary search~\citep{lehmer1960teaching} to find the highest turn complexity ($K$, the number of keys) the model can provide the correct sum for with accuracy $\geq 80\%$. We evaluate a suite of frontier models like GPT-5~\citep{openai_gpt5_system_card_2025}, Claude-4 Sonnet~\citep{anthropic_claude4_system_card_2025}, Grok 4~\citep{xai_grok4_model_card_2025}, Gemini 2.5 Pro~\citep{google_gemini_25_report_2025}, Kimi K2\mbox{~\citep{kimiteam2025kimik2openagentic}}, and DeepSeek-R1\mbox{~\citep{guo2025deepseek}}. An advantage of our benchmark is that it is contamination-free, as new examples can be generated programmatically.

\looseness=-1 \textbf{Result 6: Without CoT, non-thinking models struggle to chain more than a few steps per turn.} In \Cref{fig:benchmark} (left), we first find that when prompted to answer directly, without chain-of-thought, the larger Qwen3 32B, Gemma3 27B, as well as frontier non-thinking models like DeepSeek-V3 (670B), and Kimi K2 (1026B), fail to execute even a turn complexity of more than six. This is consistent with prior work showing the necessity of thinking tokens for transformers to perform sequential tasks~\citep{weiss2021thinking, merrill2023expressive}. We see that the number of steps the model can execute in a single turn improves significantly with chain-of-thought. This reinforces the importance of reasoning before acting (ReAct~\citep{yao2023react}) for agents, even if this costs more and fills up the context window. We show preliminary evidence that parallel test time compute is not as helpful, with majority voting leading to only marginal improvements in execution length (\Cref{sec:majvote}).

\looseness=-1 \textbf{Result 7: Benchmarking frontier models.} In \Cref{fig:benchmark} (right), we benchmark frontier models on the length of task they can execute in a single turn. We find a surprisingly large gap between GPT-5 (codenamed \textit{Horizon}) with 2176 steps and others like Claude-4 Sonnet (432 steps), Grok 4 (384 steps), and Gemini 2.5 Pro (120 steps). 
\begin{newchanges}
Qwen3-Next, which combines Gated DeltaNet \citep{yang2025gated} with standard attention, designed explicitly for long-context tasks, outperforms even larger models that use only standard attention, hinting at the impact of model architecture for long-horizon tasks.
\end{newchanges} Overall, even our simple task can separate frontier models in their long-horizon execution capability, and presents a clear opportunity to improve current open-weight models.

\section{Related Work}

\textbf{Increasing Task Length.} Multiple works have recently shown how models worsen as \textit{problem complexity} increases \citep{zhou_gsm-infty_2025}, often attributed to failures of reasoning~\citep{cheng2025largelanguagemodelsmake, shojaee2025illusionthinkingunderstandingstrengths}. Recently, multiple real-world long-horizon agentic benchmarks have been proposed~\citep{backlund_vending-bench_2025, xie2024travelplannerbenchmarkrealworldplanning, shen_taskbench_2025}, where prior work has studied planning failures~\citep{chen_can_2024}. By designing a task where no reasoning is required, given that we provide the model the requisite plan and knowledge, we show that execution alone can be a challenge, degrading model accuracy on longer tasks. Our observations on scaling could hold for the related problem of length-generalization---training models to succeed on tasks longer than those seen during training~\citep{fan_looped_2024, cai_extrapolation_2025}. 

\textbf{Long Context.} Much of prior work has focused on improving the maximum context length that can be provided in the input to a language model~\citep{su2021roformer}, and evaluating whether \mbox{\citep{tay_long_2020-1}} and how~\citep{olsson_-context_2022, li_transformers_2023} models maintain performance as the context gets longer~\citep{tay_long_2020-1}. Closest is the recent RULER~\citep{hsieh_ruler_2024} and GSM-Infinite \mbox{\citep{zhou_gsm-infty_2025}}, which also use synthetic data to systematically evaluate long-context abilities. While long-context will help models execute for longer, it is a different capability compared to long-horizon execution~\citep{zhou_generalizable_2023, chen_llm-state_2024}, as it focuses on performance as a function of input, not output length. We identified one such difference, the self-conditioning effect---where past errors in model output increase the chance of future mistakes, and disentangle this effect from long-context degradation in \Cref{sec:context-manipulation}. 

\begin{newchanges}
\looseness=-1 \textbf{Self-correction.} To address an issue similar to self-conditioning at the token level called ``exposure bias'', \citet{ranzato2015sequence} proposed using reinforcement learning to train language models. This trains the model to react to mistakes in its prior outputs by correcting them, instead of making more of them, an intuition that also guides more recent work on RL for self-correction~\citep{kumar2024training, ma2025s}. As a training-free fix for our setting, we also try model-based self-verification (see \citet{kamoi2024can} for a survey) at each turn in Appendix~\ref{sec:self-verify}, which can be considered a form of self-refinement~\citep{madaan2023self} without having access to environment feedback~\citep{gou2023critic}.

\end{newchanges}

\textbf{Scaling LLMs and RL.} Scaling laws for language models show diminishing returns on the loss for the single step of predicting the next token~\citep{kaplan2020scalinglawsneurallanguage, hoffmann2022training}. When models competed in simple knowledge-based question-answering tasks such as MMLU~\citep{hendrycks2020measuring}, such single-step measurements could inform us about the rate of progress. This has changed in the last year. Where earlier we could only post-train on human demonstrations~\citep{mishra2021cross}, language models can now be trained with just rewards~\citep{shao2024deepseekmath}, enabling sophisticated reasoning~\citep{guo2025deepseek, jain2024livecodebench} and agents~\citep{team2025kimi}. This opens up the opportunity to solve much longer tasks where earlier human supervision would be too expensive to scale. Our work shows how diminishing returns on single-step performance can compound to provide large benefits in the length of tasks a model can solve. This motivates the need to study empirical scaling laws for horizon length in agents~\citep{hilton2023scaling}. 

\textbf{Tool Use.} In symbolic AI, once tasks are formalized, for example, into STRIPS plans \citep{fikes_strips_1971}, they can be executed without issues. Prior work~\citep{chen_can_2024,valmeekam_systematic_2024} has shown LLMs struggle to match symbolic algorithms for automated planning. In contrast, we show LLMs can fail on straightforward execution~\citep{zhu_ai_2025, sun_l0-reasoning_2025, stechly2024chain} over a long horizon even when the plan is provided. Teaching LLMs to use tools offers one way to shift the burden of execution from probabilistic models to reliable programs~\citep{schick2023toolformer}. However, reasoning is often fuzzy and not always easy to implement as a tool, requiring the model to execute some steps by itself. Even calling the right tools requires reliable execution from the model~\citep{patilberkeley}.

\section{Conclusion}

\vspace{-0.1in}

\looseness=-1 In this work, we show how short-task benchmarks may give the illusion of slowing progress for modern language models. We show that scaling model size increases the number of turns a model can execute, while sequential test-time compute increases the length of tasks a model can perform on a single turn. Together, these contribute to dramatically increasing horizon lengths for LLMs.

\textbf{Limitations.} As with any ``synthetic'' task~\citep{AllenZhu-icml2024-tutorial, poli_mechanistic_2024-1, chollet2024arc} used for a controlled study of LLM capabilities, there are a few limitations of our setup. Improvement on our task is necessary, but not sufficient for long-horizon execution on real-world tasks. It does not reflect the complexities and sources of error arising in real agentic tasks with a large number of possible actions. In such settings, the number of actions and the accuracy of each action can both vary based on the plan, requiring more careful consideration. 
\begin{newchanges}
Another issue in more complex tasks could be the possibility of multiple distinct correct plans, such that even if we provide a plan to the model, it could deviate from it while being correct. This is not possible in our controlled study, as using the wrong keys would likely lead to a wrong outcome.
In \Cref{sec:realselfcondn}, we demonstrate how self-conditioning is a significant failure mode for LLM agents even in realistic benchmarks like ALFWorld \citep{ALFWorld20}, GAIA \citep{gaia}, and WebShop \citep{webshop}.
Our results are observations about current LLMs, and not inherent properties of transformers, so they might change with task-specific finetuning. Particularly, the single-turn task in \Cref{sec:thinking} is in theory parallelizable, which we discuss further and propose pathways to future-proofing via more inherently sequential tasks in \Cref{sec:sequentiality}. 
\end{newchanges}
Finally, our current task accuracy metric does not account for self-correction. In tasks where mistakes are acceptable and easy to undo, self-correction is a promising direction to improve long-horizon execution.

\textbf{Outlook.} Scaling up the length of tasks a model can complete would be a major step towards realizing the true potential of general, open-ended agents~\citep{raad2024scaling}. If they are trained in simulated environments created with generative models~\citep{bruce2024genie}, maintaining accuracy over a long-horizon becomes doubly important. By showing long-horizon execution can be studied on simple tasks, we hope to inspire more research on this capability, as it is an increasingly important capability in the era of experience~\citep{silver2025welcome}.

\section*{Reproducibility statement}
To ensure the reproducibility of our findings, we have provided comprehensive details of our methodology, experiments, and theoretical results. The mathematical derivation and assumptions for Proposition \ref{lem:horizon_length} are detailed in \Cref{app:proof_lemma1}. All specifics of our experimental methodology are consolidated in \Cref{sec:exp-setup}, which covers our synthetic task design and data generation (\Cref{sec:setup1}), the exact prompts used for both standard and thinking models (\Cref{sec:setup2,sec:setup3}), model specifications and hyperparameters like temperature and top-p (\Cref{sec:setup4}), and the computational resources used for the evaluations (\Cref{sec:setup5}). Furthermore, we have included the complete source code, data generation scripts, and the exact datapoints used as supplementary material.

\section*{Acknowledgements} 

\looseness=-1 We thank Maksym Andriushchenko, Nikhil Chandak, Paras Chopra, Dulhan Jayalath, Abhinav Menon, Sumeet Motwani, Mathias Niepert, Lluís Pastor Pérez, Ameya Prabhu, Tim Schneider, Shashwat Singh, and Timon Willi for helpful feedback. AA was funded by the CHIPS Joint Undertaking (JU) under grant agreement No. 101140087 (SMARTY), and by the German Federal Ministry of Education and Research (BMBF) under the sub-project with the funding number 16MEE0444. AA thanks the International Max Planck Research School for Intelligent Systems (IMPRS-IS) and the ELLIS PhD programs for support. The authors gratefully acknowledge compute time on the Artificial Intelligence Software Academy (AISA) cluster funded by the Ministry of Science, Research and Arts of Baden-Württemberg.

\section*{Author Contributions}

SG conceived the project. AS led the execution of the experiments with the help of AA, while SG led their planning with the help of AA, AS, and JG. SG and AA wrote the paper, while AS worked on the figures. JG and SS advised the project, providing valuable feedback throughout.

\bibliographystyle{iclr2026_conference}
\bibliography{main}

\newpage
\appendix
\part{Appendix}
\localtableofcontents
\clearpage
\begin{newchanges}
\section{Self-Conditioning in Realistic Agentic Tasks}
\label{sec:realselfcondn}
\citet{zhu2025llmagentsfaillearn} recently released\footnote{\href{https://drive.google.com/drive/folders/1bQe6dQA85pktT63YnKIKJDTVaH3O3Vpu}{Link to AgentErrorBench trajectories}} annotated failed trajectories from more realistic, popular agent benchmarks like GAIA \citep{gaia}, ALFWorld \citep{ALFWorld20}, and WebShop \citep{webshop}. We manually analyzed each annotated sample and identified potential instances of self-conditioning. We attach an example of an annotated entry below.
\begin{tcolorbox}[takeawaybox]
\begin{lstlisting}[basicstyle=\small\ttfamily, showstringspaces=false, breaklines=true, columns=flexible]
"trajectory_id": "GPT-4o_001_chat_b000_t00_e00-d3248f80",
"LLM": "GPT-4o",
"task_type": "webshop",
"critical_failure_step": 29,
"critical_failure_module": "plan",
"step_annotations": [
  {
    "step": 29,
    "plan": {
      "failure_type": "inefficient_plan",
      "reasoning": "This plan loops back to the very beginning, moving in the wrong direction, and results in an inefficient search query that is almost identical to the initial ones."
    }
  }
]
\end{lstlisting}
\end{tcolorbox}
We search for entries where the failure reason mentions the agent repeating the previous mistakes repeatedly and eventually degenerating into failure trajectories. Some of the error categories that fit our search were,
\begin{itemize}[leftmargin=*, labelindent=0pt]
    \item \texttt{inefficient\_plan:} Plan is overly long or illogical because of prior errors.
    \item \texttt{progress\_misjudge:} Incorrectly evaluates progress based on its own prior outputs.
    \item \texttt{causal\_misattribution:} Correctly notes failure but blames the wrong cause due to its prior outputs, misguiding subsequent plans.
\end{itemize}

Through this process, we estimate that roughly $20\%$ of GAIA, $48\%$ of ALFWorld, $33\%$ of WebShop failures are similar to self-conditioning. Here are some such trajectory annotations as examples:
\begin{tcolorbox}[takeawaybox]
\begin{description}[leftmargin=*, labelindent=0pt]
    \item[\textbf{WebShop:}]
      \textit{``…This set the agent on an endless loop of query reformulation
      and paging through irrelevant results, making success impossible…''}\\

    \item[\textbf{ALFWorld:}]
      \textit{``…This set a precedent for all subsequent memory outputs to ignore
      key details about which objects had been visited what actions were performed
      at each and what the actual observations were…''}\\

    \item[\textbf{GAIA:}]
      \textit{``Make an low efficient plan that not success, repeat the similar action that not success''}
\end{description}
\end{tcolorbox}

A caveat in the analysis we present above is that the correctness of steps is subjective to determine on these tasks for a quantitative study of self-conditioning. This is exactly where having a controlled study like ours, even if with a synthetic task, is extremely valuable. It allows us to isolate execution failures from planning and knowledge gaps, which is difficult in more complex agentic tasks.
\end{newchanges}

\begin{newchanges}
\section{Sequentiality of the Single-Turn Task}
\label{sec:sequentiality}
A potential limitation of the single-turn setting in Section~\ref{sec:thinking} is that retrieving and adding a list of numbers is, in theory, parallelizable inside a transformer~\citep{merrill2023expressive}. This is because adding a list of numbers lies in the Threshold Circuit 0 (TC0)~\citep{hajnal1993threshold} class of problems that can be solved with constant depth computation. While currently our results show that frontier language models fail to perform more than a few such additions without chain of thought, since this is in theory learnable, our single-turn study is not future-proof. Note that this limitation does not apply to our multi-turn experiments, as the keys to be retrieved and their value added is provided turn-by-turn, and thus the model has to perform these steps sequentially. To mitigate this, we now perform a study with a sequence of $2 \times 2$ (non-stochastic) matrix multiplications, which in the general case lies in TC1, generally believed to not be equivalent to TC0~\citep{mereghetti2000threshold}. 

\paragraph{Experimental Setup.} To keep this task similar to the key-value addition task introduced in~\Cref{sec:experiments}, we create an analogous key-value matrix multiplication task, where each key corresponds to a randomly generated $2\times2$ matrix. Since multiplication causes values to blow up to very large numbers quickly, we ensure each element in the matrix is in the range of $[-9,9]$. The LLM is then prompted to complete a chain multiplication of $K$ matrices, where $K$ corresponds to the per-turn complexity (Section~\ref{sec:thinking}). The full prompt is presented below. Analogous to the key-value addition task, we again have a series of \textit{retrieve-then-compose} steps:

\begin{enumerate}[leftmargin=2em]
    \item \textbf{Retrieval:} Look up the matrix $\mathcal{M}[k]$ for each key $k \in P_t$
    \item \textbf{Composition:} Compute the product of these matrices and multiply by the previous state, $S_t = S_{t-1} \cdot (\prod_{i=1}^{K} \mathcal{M}[k_{t,i}])$
\end{enumerate}

\begin{tcolorbox}[takeawaybox]
\scriptsize
\textbf{Starting Prompt}\texttt{\\You are an AI assistant. I will provide you with a dictionary of keys mapped to 2x2 matrices and give you keys in groups of 2. Maintain the running 2x2 product starting from the identity matrix.\\For each group, respond with the current product in row-major order inside <answer></answer> tags using four comma-separated integers.\\IMPORTANT: Do not output any other text inside <answer> tags. Do NOT output anything else at all.\\\\Illustrative examples using a reference dictionary:\\{\\  alpha: [[1, 0], [0, 1]]\\  beta: [[2, 1], [0, 1]]\\  gamma: [[1, 0], [1, 1]]\\  delta: [[0, 1], [1, 0]]\\}\\Example 1:\\Input Keys: ['alpha', 'beta', 'gamma']\\Illustrating with product after each key:\\Result: <answer>1,0,0,1 | 2,1,0,1 | 3,1,1,1</answer>\\\\Example 2:\\Input Keys: ['delta', 'beta', 'gamma', 'alpha']\\Illustrating with product after each block of 2 keys:\\Result: <answer>0,1,2,1 | 1,1,3,1</answer>\\\\Example 3:\\Input Keys: ['beta', 'gamma', 'beta']\\Illustrating with product after each block of 3 keys:\\Result: <answer>6,4,2,2</answer>\\\\Now, here is the actual task:\\Dictionary to maintain:\\{\\  drive: [[-3, 9], [1, -3]]\\  alone: [[6, 3], [5, -5]]\\  adult: [[-1, -5], [-2, 8]]...}\\Ready to start!\\IMPORTANT: Only output the four comma-separated integers representing the matrix product inside <answer> tags. Nothing else should be present in the output, ONLY the answer.}
\end{tcolorbox}

\paragraph{Results.} Our findings for the iterated matrix multiplication task, presented in~\Cref{fig:matmul}, are similar to those presented in~\Cref{fig:benchmark}. For each model, we evaluate $50$ independent rollouts. Without sequential test-time compute, models are unable to multiply even 2 matrices. This is expected, as each step of this task requires a constant number of sub-tasks, including multiple retrievals, multiplications, and additions. With sequential test-time compute, we see a substantial increase in each model's ability to perform the task. 

\begin{figure}
    \centering
    \includegraphics[width=\linewidth]{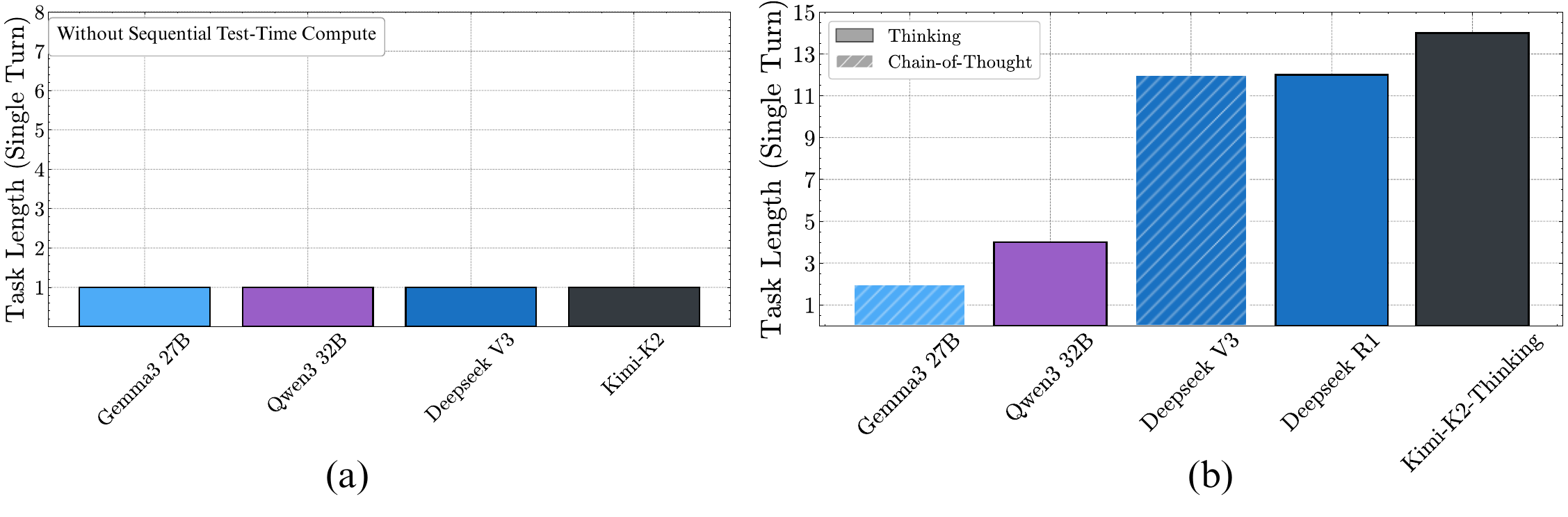}
    \caption{\looseness -1 \textbf{Benchmarking iterative matrix multiplication within a single turn.} Without CoT or thinking, even the biggest models fail to multiply just two matrices (a). Sequential test time compute (thinking tokens) significantly improves this (b).}
    \label{fig:matmul}
\end{figure}

\end{newchanges}

\section{Investigating Potential Fixes for Self-conditioning}
\label{sec:fixes}

\subsection{Turn-wise Self-Verification Prompting}
\label{sec:self-verify}
We investigate whether the self-conditioning effect can be mitigated by explicitly prompting the model to perform active self-correction. At each turn, we instruct the model to first re-validate its previously reported state and, if required, recalculate the full historical sum before processing the current turn's keys.

The results, shown in Figure \ref{fig:self-verification}, are mixed. For the Gemma3 family with CoT, this setup provides an initial boost in accuracy, successfully breaking the self-conditioning loop in early turns. However, the self-verification process significantly increases the number of tokens generated per turn, causing the model to exhaust its context window much sooner, which leads to a sharper performance collapse in later stages. In contrast, the Qwen3 thinking models show negligible improvement. Manually inspecting their reasoning traces, we find that these models, likely due to their fine-tuning, overthink and frequently fail at the verification step itself, sometimes making arithmetic errors even during their re-calculation process.

These findings suggest that prompting-based self-correction may not be enough. It is computationally expensive, incurring a context-length penalty, and is itself a complex, error-prone execution task that models may not be able to perform reliably.

\begin{figure}[h]
    \centering
    \includegraphics[width=\linewidth]{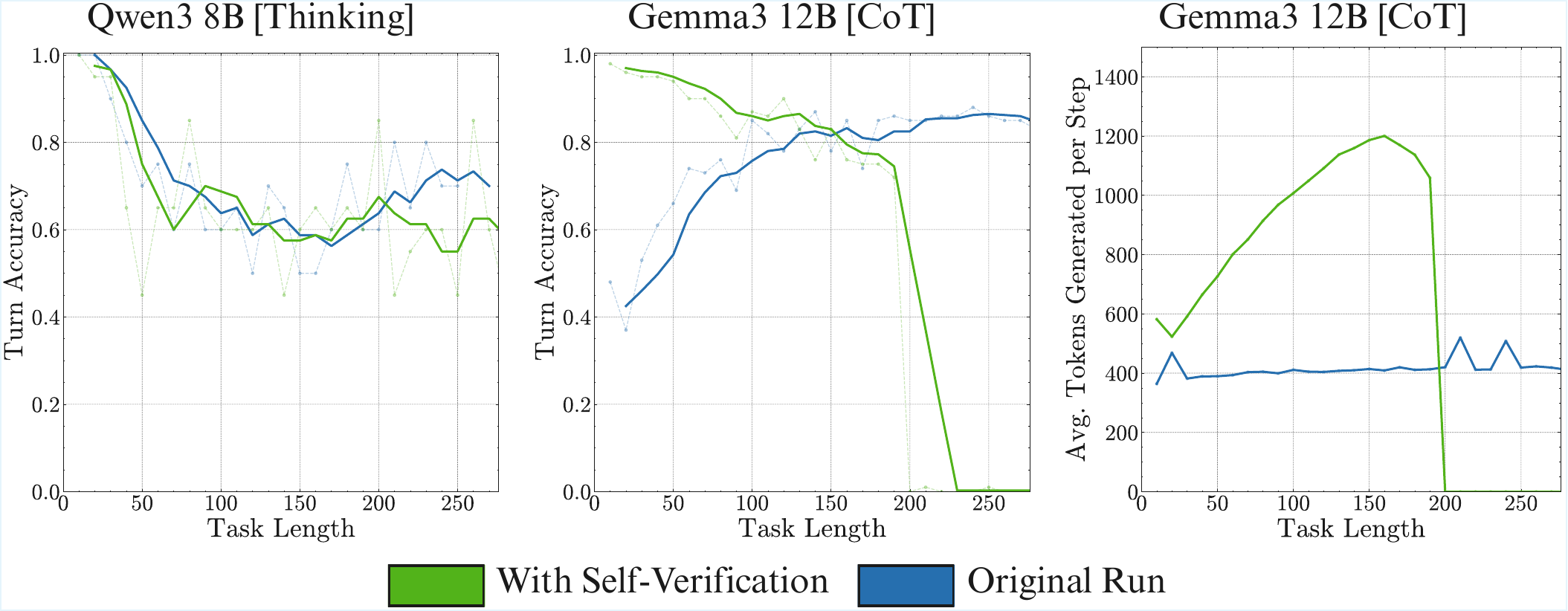}
    \caption{\textbf{Self-verification does not fix self-conditioning}. Prompting to self-verify does not suffice to fix the self-conditioning effect completely. It leads to overthinking in thinking models and increases the amount of tokens required per turn, leading to faster context consumption in CoT models.}
    \label{fig:self-verification}
\end{figure}

\subsection{Context Engineering}
\label{sec:context-man}
Another natural mitigation strategy is to limit the model's exposure to its own past errors in its history. We operationalize this using a simple sliding context window, which is particularly well-suited for Markovian tasks like ours. This approach maintains only the $N$ most recent turns in the model's context. The rationale is that a smaller context window reduces the probability of the model observing a lot of its own past failures, thereby breaking the negative feedback loop of self-conditioning.

As shown in \Cref{fig:rq5} (a), performance improves significantly as the context window size is reduced, allowing models to sustain execution for longer horizons. While a fixed sliding window is only applicable to tasks without long-range dependencies, this result validates a more general principle: active context management designed to minimize the accumulation of errors in the context is a promising direction for improving long-horizon reliability in LLM agents.

\begin{figure}[H]
    \centering
    \begin{subfigure}{0.48\textwidth}
        \centering
        \includegraphics[width=\textwidth]{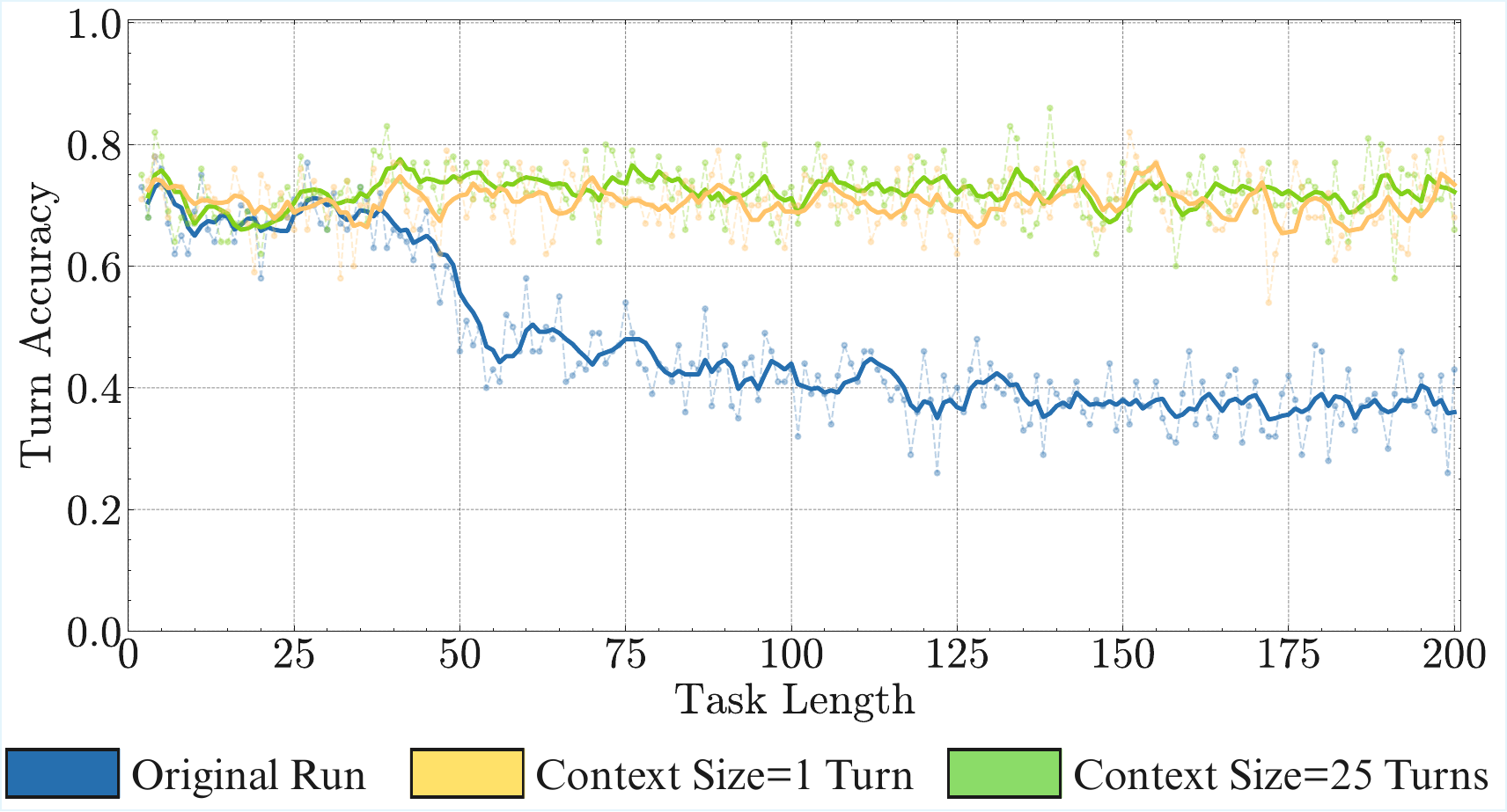}
        \caption{Context Engineering}
        \label{fig:context-mgmt}
    \end{subfigure}
    \begin{subfigure}{0.48\textwidth}
        \centering
        \includegraphics[width=\textwidth]{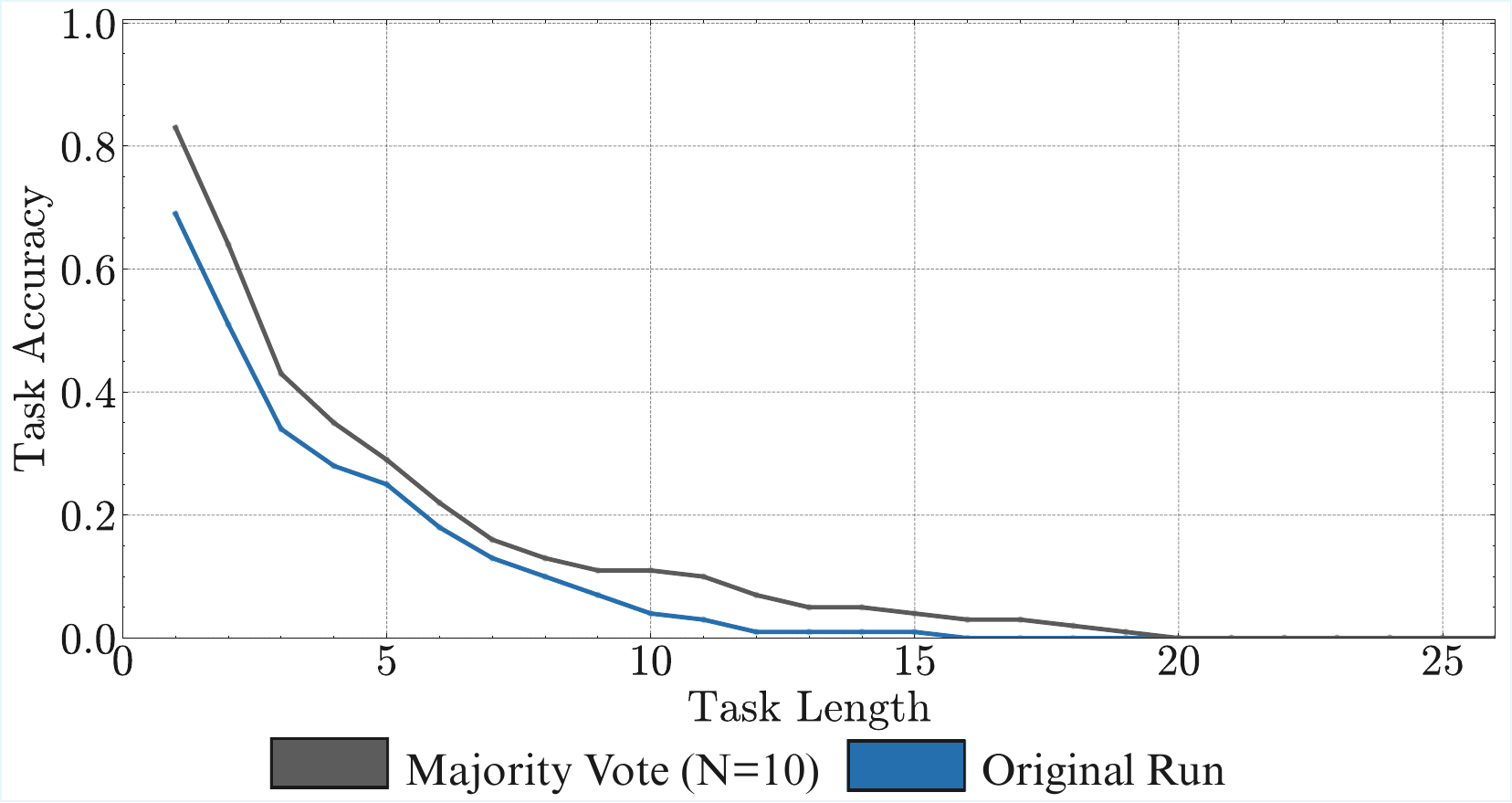}
        \caption{Majority Voting}
        \label{fig:maj-vote}
    \end{subfigure}
    \caption{\textbf{Context engineering and majority voting on Gemma3 12B.} Controlling context size reduces the self-conditioning effect, but relies on the Markovian nature of our task. Majority voting at K=1 provides only minimal improvements over the baseline.}
    \label{fig:rq5}
\end{figure}

\section{Can Parallel Test-time Compute Scaling Match Thinking?}
\label{sec:majvote}

\begin{wrapfigure}{r}{0.4\linewidth}
    \vspace{-0.2in}
    \centering
    \includegraphics[width=0.90\linewidth]{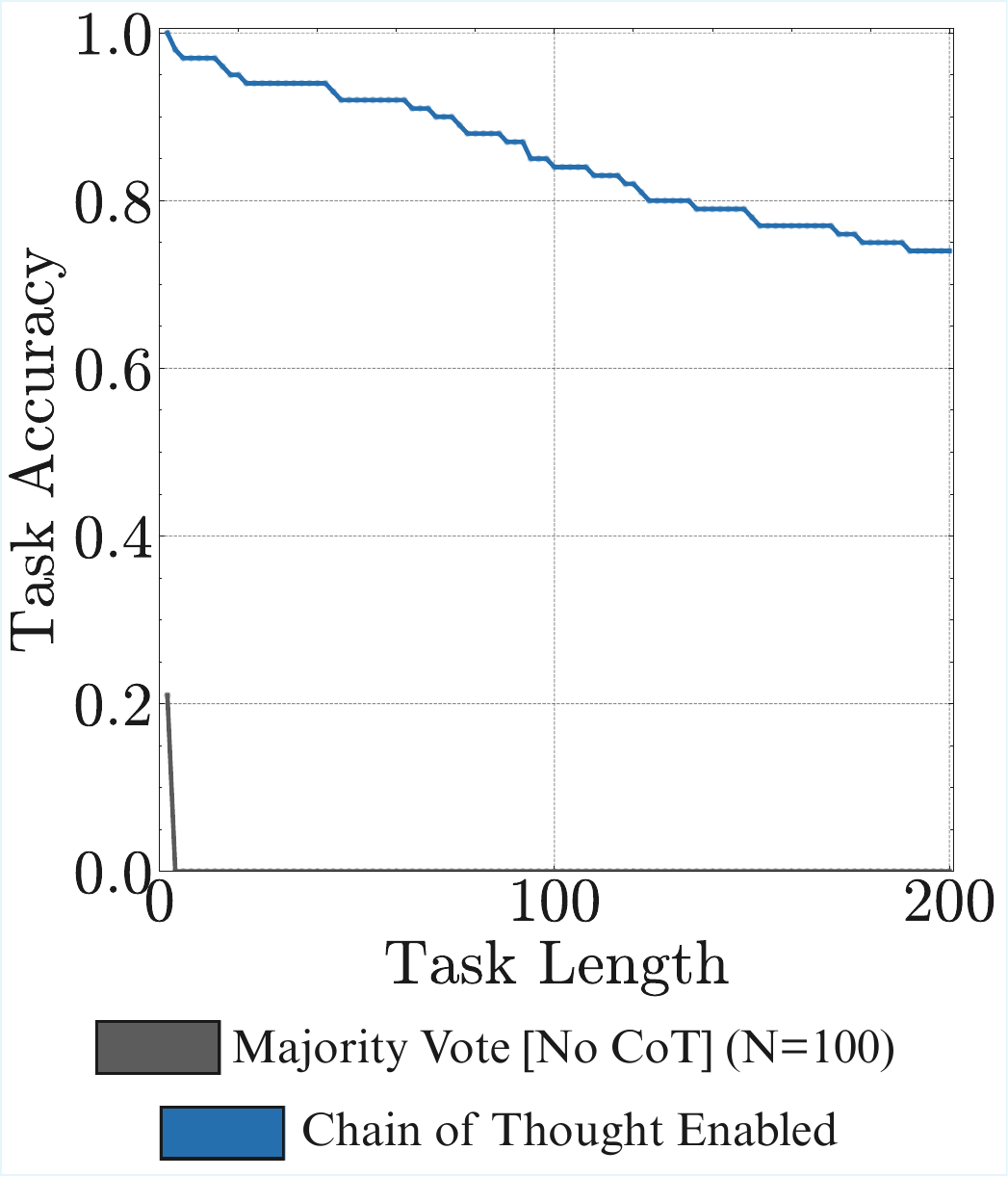}
    \caption{\textbf{Parallel test time scaling on Gemma3 12B at K=2.} Majority voting with the same amount of tokens as CoT traces does not match the performance of CoT.}
    \label{fig:parallel-thinking}
    \vspace{-0.1in}
\end{wrapfigure}

\looseness=-1 We also experiment to validate if parallel scaling in test-time compute can achieve the same improvements as thinking. We verify this by testing if parallel majority voting can replicate the gains from either model scale or sequential computation (\textit{thinking}). To create a fair comparison, we sample multiple outputs from a non-thinking Gemma3 model at each turn, with the number of samples set to match the average token count of its CoT counterpart. The final answer is determined by a majority vote over these parallel generations. From the results in \Cref{fig:parallel-thinking} and \ref{fig:rq5} (b), we see that while majority voting yields a marginal performance improvement over the base model, it is insufficient to match the reliability of a larger, non-thinking model, let alone the substantial gains from using CoT reasoning. This suggests that for long-horizon execution, sequential computation provides an advantage that parallel test time scaling cannot match. This contrasts with findings in other domains, such as math or common-sense reasoning, where parallel sampling with self-consistency has been shown to be highly competitive \citep{snell2024scaling}.

\section{Number of Turns vs Turn Complexity}
\label{sec:wc-vs-turn}
In our experiments, we show that we can increase the length of the task needed to be performed by either (1) increasing the number of turns or (2) increasing the turn complexity, i.e, providing more inputs in the same turn. To investigate the relationship between these two axes, we perform an experiment where a model has to perform a fixed number of operations while varying the turn complexity. With a higher turn complexity, the model requires fewer turns to reach the fixed number of operations. Results in \Cref{fig:wc-vs-turn-full} (a) indicate there is no strict turn complexity that is consistently the best across model families. Rather, we found that different models behave quite differently for the same turn complexities. Qwen3 32B seems to show poorer performance at lower turn complexities, indicating that it is unable to perform well over a large number of turns, even if the turns are simple themselves. Gemma3 12B shows a different trend. It reaches accuracy peaks at either extreme of the turn complexity spectrum, failing badly at mid-level turn complexities. This indicates it suffers when the turn complexity and the number of turns are both sufficiently high. 

Another axis of evaluating the number of turns vs turn complexity trade-off is the test-time compute used. From a cost perspective, increasing the number of turns increases the overall cost of inference. We can lower the number of turns by increasing the turn complexity, but that would result in an increase in the per-turn inference cost, as a result of the added complexity. For the same experiment above, we track the number of output tokens used for computation (including thinking tokens) per sample, and again find diverging results for each family in \Cref{fig:wc-vs-turn-full} (b).

\begin{figure}
    \centering
    \begin{subfigure}{\textwidth}
        \centering
        \includegraphics[width=\textwidth]{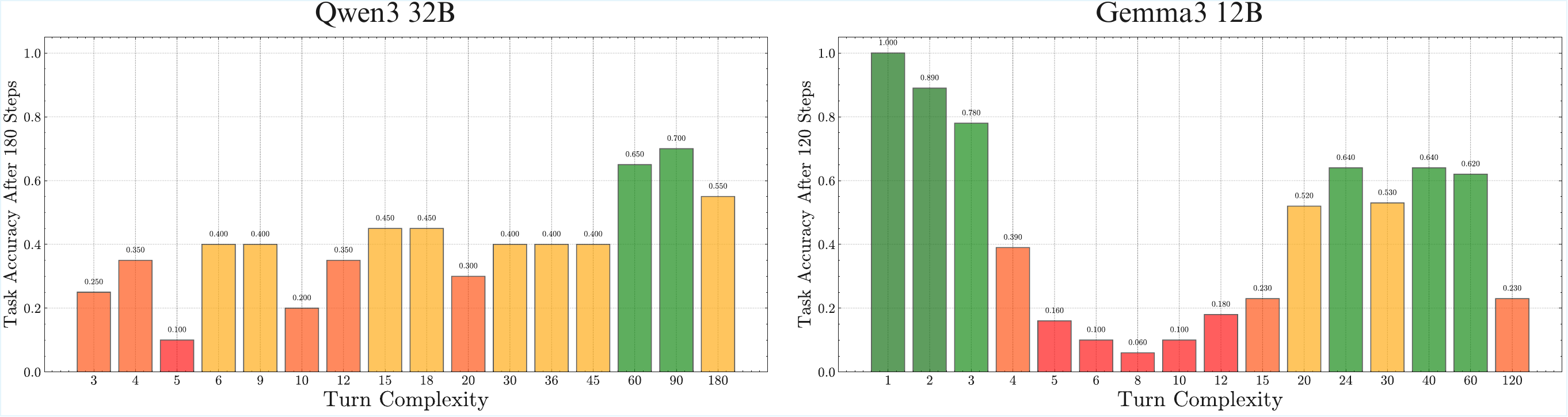}
        \caption{For the same number of total steps, different turn complexities lead to different outcomes. We find no trend across families.}
        \label{fig:wc-vs-turn}
    \end{subfigure}
    \\
    \begin{subfigure}{\textwidth}
        \centering
        \includegraphics[width=\textwidth]{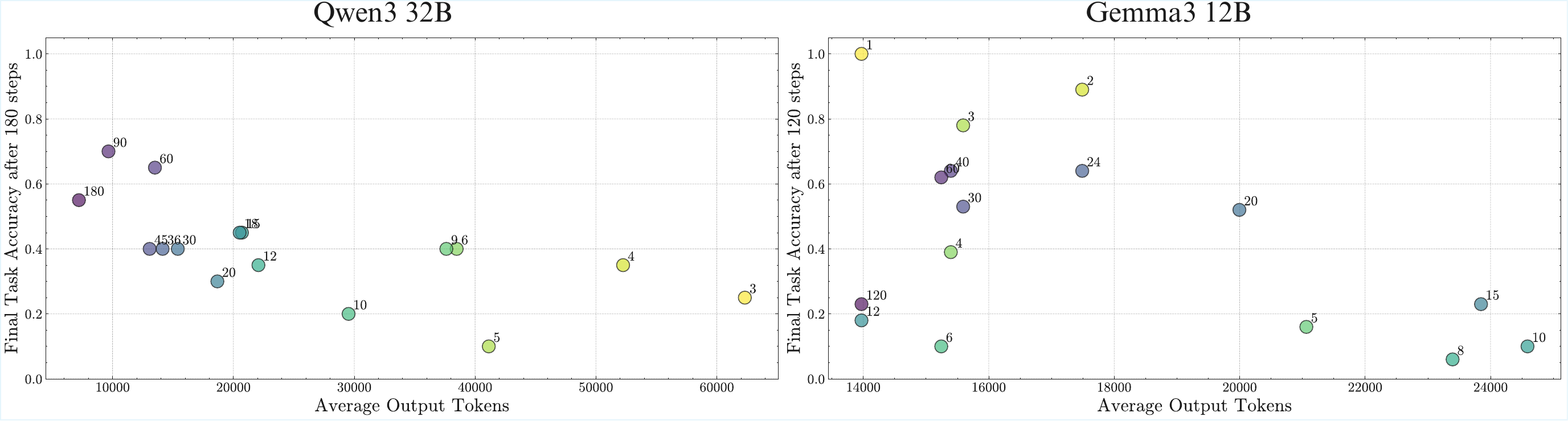}
        \caption{Average output tokens used to complete the execution vs final accuracy. We see that for Qwen3 32B, more turns lead to more token usage, even at lower turn complexities, pointing to overthinking. Gemma3 12B, on the other hand, uses fewer tokens for very low turn complexity or very high turn complexity.}
        \label{fig:wc-vs-turn-cost}
    \end{subfigure}
    \caption{\textbf{Relation between the turn complexity and the number of turns.}}
    \label{fig:wc-vs-turn-full}
\end{figure}

\begin{figure}
    \centering
    \includegraphics[width=\linewidth]{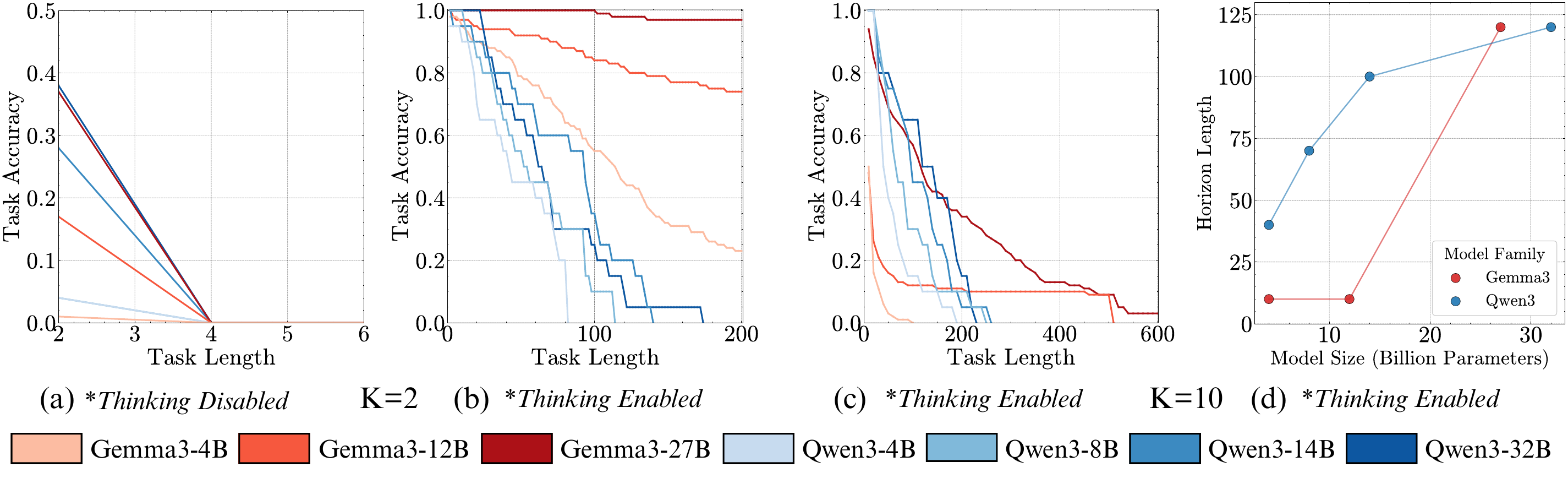}
    \caption{\textbf{Scaling trends hold even after enabling Sequential Test Time compute.} We compare model performance with thinking disabled (a) against thinking enabled \begin{newchanges} for Qwen models and CoT for Gemma models \end{newchanges} (b, c) at varying turn complexities. (a) Without thinking, all models fail to execute even two steps ($K=2$) in a single turn. (b) In contrast, enabling thinking prevents this performance collapse, with all models successfully handling $K=2$. (c) When the turn complexity is further increased to $K=10$, performance degrades, but a clear scaling trend emerges. (d) This trend is explicitly shown, illustrating that for complex turns, the horizon length increases consistently with model size, reinforcing the benefits of scaling model size even when thinking is enabled.}
    \label{fig:scaling-trends-think}
\end{figure}

\begin{figure}
    \centering
    \includegraphics[width=\linewidth]{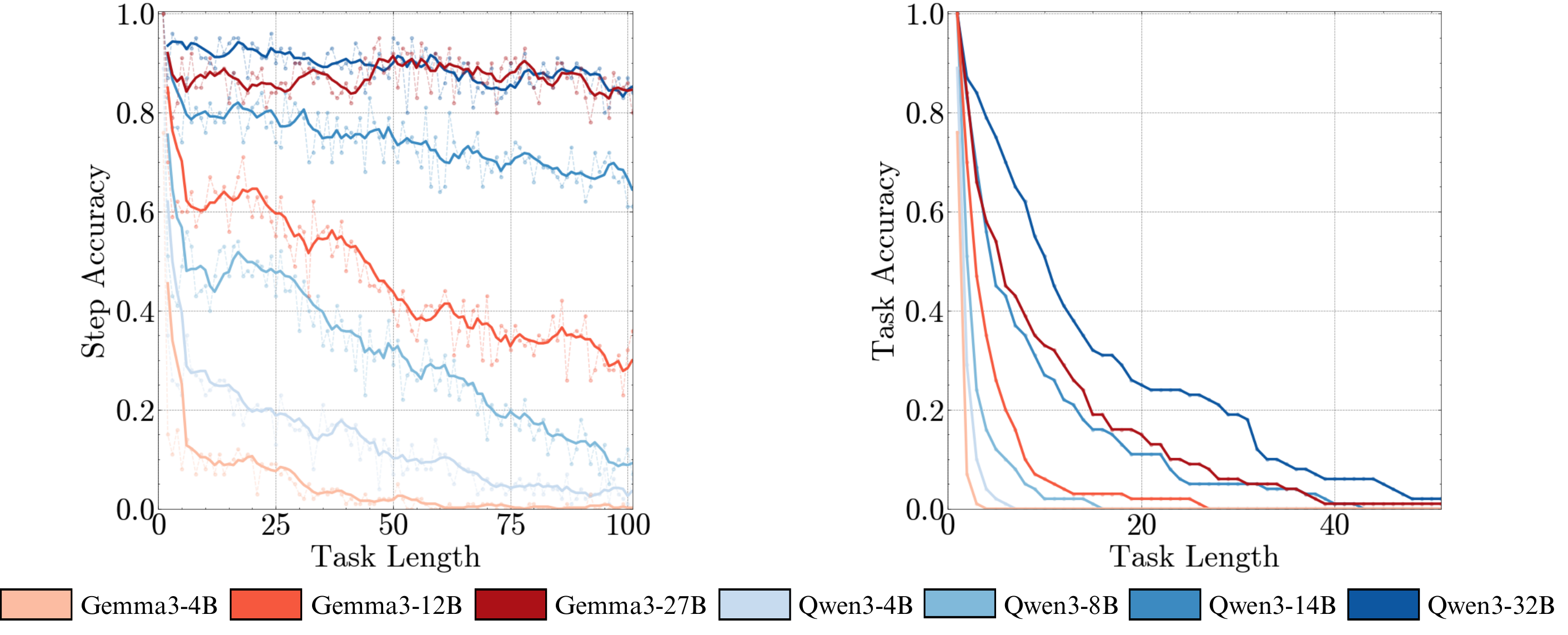}
    \caption{\textbf{Temperature does not impact the trends observed.} We reproduce the same trends in \Cref{fig:scaling_trends}, when running with temperature 0.}
    \label{fig:temp-zero}
\end{figure}

\section{Deconstructing Errors in Retrieve-then-compose Steps}
\label{sec:breaking-down}

To further isolate the source of execution errors, we decompose our task into its two constituent operations--retrieval and addition--and evaluate models on them individually:
\begin{itemize}
    \item \textbf{Retrieval-Only Task.} A stateless task where, at each turn, the model is given a key and must simply return the corresponding integer value from the dictionary. No running sum is maintained. This isolates the retrieval component.
    \item \textbf{Addition-Only Task.} A stateless task where, at each turn, the model is given two random integers to add. No running sum is maintained. This isolates the arithmetic component.
    \item \textbf{Prefix-sum Task.} A stateful task where, at each turn, the model is given an integer directly and must add it to its previously reported running sum. This isolates the combination of arithmetic and state-tracking components.
\end{itemize}

From \Cref{fig:execution-failure}, we can observe that models achieve near-perfect performance on the stateless retrieval and addition task, indicating that neither simple dictionary lookup nor addition is a significant source of error. In contrast, the prefix sum task, while significantly better than our task, still exhibits a slow degradation over time.

This leads to two key insights. First, the difficulty lies not in the atomic operations themselves, which models perform with high accuracy in isolation over long horizons. Second, this suggests that the primary source of degradation is the \textit{state-management} component of the task. While stateless retrieval and addition are trivial, the requirement to reliably maintain and update a running sum introduces higher chances of error. This suggests that the models struggle with the requirement to concurrently manage information lookup and state updates.

\begin{figure}[h]
    \centering
    \begin{subfigure}{0.48\textwidth}
        \centering
        \includegraphics[width=\linewidth]{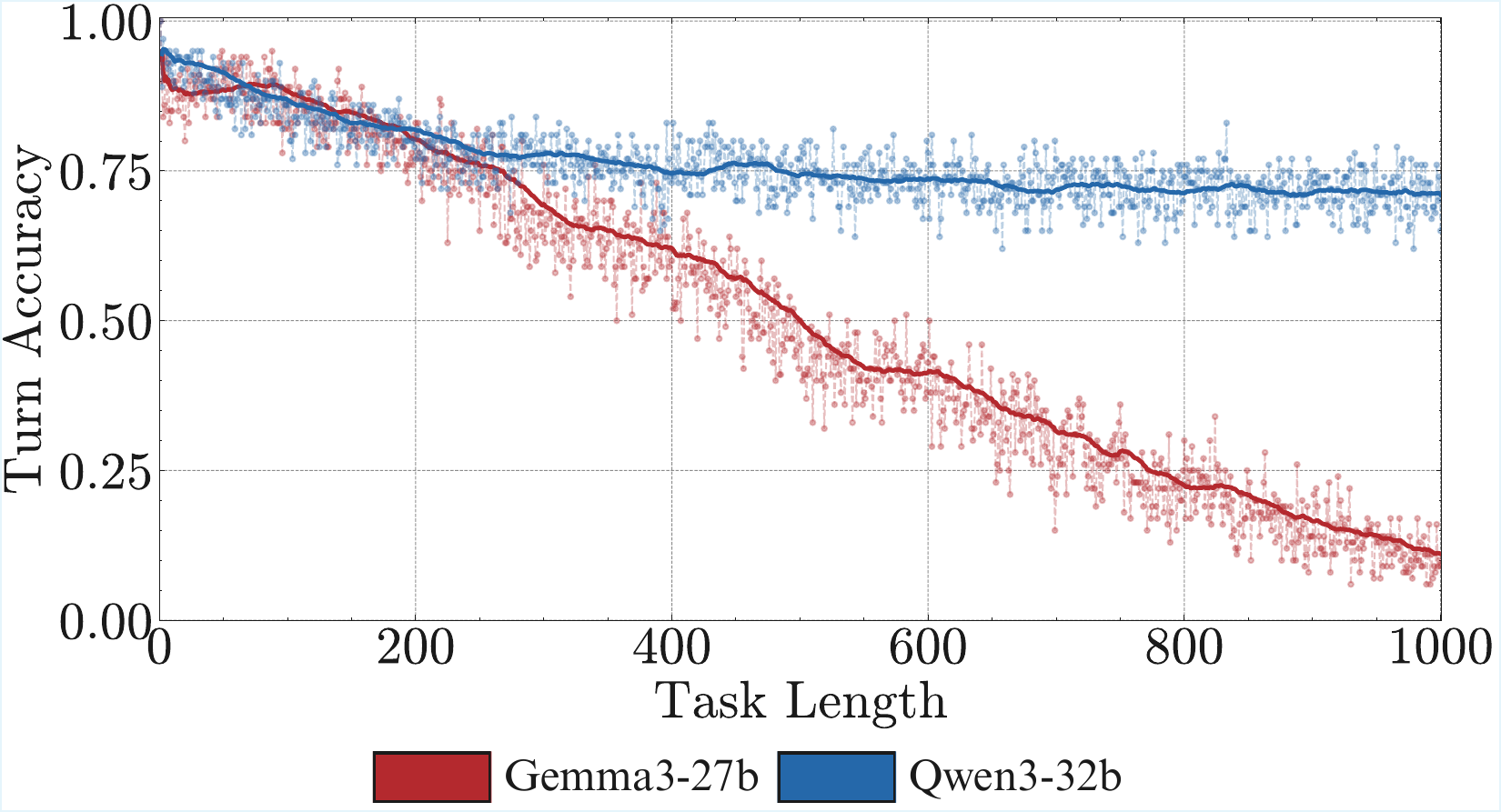}
    \end{subfigure}
    \hfill
    \begin{subfigure}{0.48\textwidth}
        \centering
        \includegraphics[width=\linewidth]{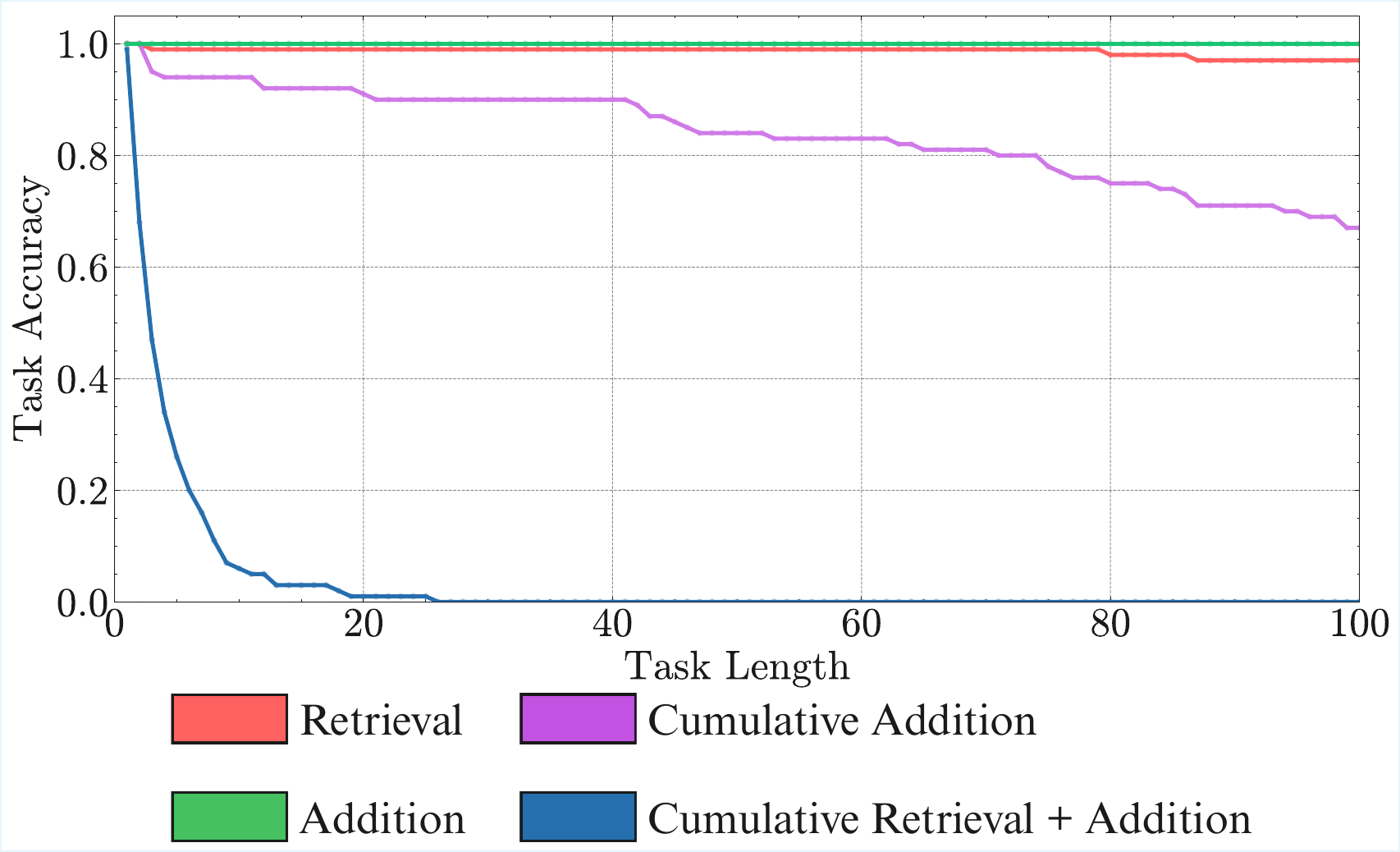}
        \label{fig:rq3-2}
    \end{subfigure}
    \caption{\textbf{Analysis of execution failures.} (a) Self-conditioning effect emerges as tasks get longer. Even for models that ace the task at a task length of 100, the Turn Accuracy drops constantly as we further increase the turns. (b) Models are good at the tasks individually, but not on their composition. State tracking introduces additional difficulty.}
    \label{fig:execution-failure}
\end{figure}

\section{Experimental Setup}
\label{sec:exp-setup}

\subsection{Task Details}
\label{sec:setup1}
We create a dictionary where keys consist of common five-letter English words, and the values consist of integers uniformly sampled from $-99$ to $99$. The range of values is deliberately kept large to minimize the chance of an assistant being wrong in an earlier turn correcting its response by pure accident. For our experiment, we first create a fixed set of $100$ keys. Then, we create multiple rollouts (samples) of $50,000$ steps. For each rollout, we uniformly sample a separate set of values to be assigned to each key, in order to increase experimental breadth. Next, at each step, we uniformly sample a key to be provided at that step with replacement. This gives us a list of keys to be processed in order, which is exactly the plan to be executed by the agent.

To account for the turn complexity ($K$), we group $K$ consecutive keys together and represent them as \textit{one turn}. Thus, we can fully specify our intended evaluation by (1) specifying the number of samples (rollouts) needed, (2) the turn complexity, $K$, and (3) the number of turns required. This gives us a superset of data from which we sample rollouts to use in our experiments. For the Qwen3 and Gemma3 families, we sample 100 rollouts. For frontier models, due to cost limitations, we sample 20-50 rollouts. To ensure consistency in evaluation, we provide the same rollouts to each model.

\subsection{Prompting}
\label{sec:setup2}
Each LLM is provided a standardized prompt describing the task at the start of the conversation. This prompt specifies the dictionary containing the five-letter word keys and their corresponding values. Further, the prompt specifies the number of keys that will be provided to the LLM at each subsequent turn. To ensure format following, the prompt also contains few-shot examples with different turn complexities. Finally, the LLM is asked to provide the running sum after each turn in \texttt{<answer>} tags. An example conversation is shown below.

\subsection{Prompting for Thinking Models}
\label{sec:setup3}
To enable models to use chain-of-thought prompting, we add the line \texttt{``Think step by step before answering.''} to the prompt and also add CoT traces to the few-shot examples. We find that models stop performing CoT reasoning after a few turns, as it starts conditioning on the answer format in its history. Thus, we end up \textit{including the chain-of-thought trace} in the conversation history, to ensure the model does not forget the CoT instruction. This is a trade-off we had to make as it increases the input context of the LLM, however, it was essential to ensure instruction following. Thinking models provided their reasoning in \texttt{<think>} tags, which were removed from the conversation history. No other changes were needed to make the thinking models follow instructions.

\subsection{Model Specifications}
\label{sec:setup4}
For chain-of-thought prompting, we set the per-turn output token limit to $10,000$ tokens, and for thinking models, the token limit is set to $32,000$ tokens, consistent with token limits provided by OpenRouter. We ensure that these token limits are sufficient to complete the required computations.

We use a temperature of $0.6$ and a top-p value of $0.95$ for all Gemma models. For Qwen, we use a temperature of $0.6$ and a top-p value of $0.95$ for thinking mode and a temperature of $0.7$ and a top-p value of $0.8$ for non-thinking as recommended in their documentation.\footnote{\href{https://huggingface.co/Qwen/Qwen3-32B\#best-practices}{https://huggingface.co/Qwen/Qwen3-32B}} We find in \Cref{fig:temp-zero} that temperature does not affect the observed trends by much.

\subsection{Compute Details}
\label{sec:setup5}
All experiments were conducted on machines equipped with 4x NVIDIA A100 GPUs with 40/80GB memory. Frontier model evaluations were performed using OpenRouter.

\begin{tcolorbox}[takeawaybox]
\small
\textbf{Starting Prompt:}\texttt{\\You are an AI assistant. I will provide you with a dictionary and then give you keys in groups of 2. Your task is to keep a running total (starting from 0) by adding the values associated with the keys I provide.\\In each turn, I'll provide 2 keys (comma-separated). Respond with the current running sum, enclosed in <answer> tags.\\Examples:\\Dictionary to maintain: {`apple': 5, `banana': 0, `cherry': 7, `grape': -4, `kiwi': 2, `mango': -1}\\Example 1: keys in groups of 2\\User: apple, banana\\Assistant: <answer>5</answer>\\User: cherry, grape\\Assistant: <answer>8</answer>\\User: kiwi, mango\\Assistant: <answer>9</answer>\\Example 2: keys in groups of 3\\User: apple, banana, cherry\\Assistant: <answer>12</answer>\\User: grape, kiwi, mango\\Assistant: <answer>9</answer>\\Example 3: keys in groups of 6\\User: apple, banana, cherry, grape, kiwi, mango\\Assistant: <answer>9</answer>\\Now, here is the actual task:\\Dictionary to maintain:\\{'doubt': -64, `alone': 46, `adult': 84, `fault': -19, `brain': -45, `blind': 68, ... `coach': -31, `alarm': 88, `could': 25, `cable': -32}\\Ready to start!\\IMPORTANT: DO NOT OUTPUT ANY OTHER TEXT OUTSIDE ANSWER TAGS. Only provide the final running sum OF ALL TURNS in <answer> tags.}
\\ \\
\textbf{User: }\texttt{alarm,coach}
\\ \\
\textbf{Assistant: }\texttt{<answer>57</answer>}
\\ \\
\textbf{User: }\texttt{doubt,cable}
\\ \\
\textbf{Assistant: }\texttt{<answer>-39</answer>}
\end{tcolorbox}

\section{Format Following Failures}
\label{sec:format-following}
In any LLM evaluation, format following failures are a common source of error that is often neglected. In our experiments, any model can have 2 types of format following failures: (1) They do not provide \texttt{<answer>} tags in their answer, and (2) They do not provide a valid integer within \texttt{<answer>} tags. To minimize format following failures, we ensure clarity in the starting prompt with clear format instructions, as well as few-shot examples. To empirically verify that model errors on our task are actually execution errors and not just format following errors in disguise, for each experiment, we also track the format failure fraction: the fraction of samples that do not correctly follow the format, with the failure being either (1) or (2). It is important to note that while we try to minimize any such error to the best of our abilities, we still count format following as a limitation of the model and hence a source of error.

Our results for format following failures are presented in \Cref{fig:format-errs} for the experiments presented in \Cref{sec:experiments}. We observe that smaller models are more susceptible to format failures, with the Qwen3 family in particular being worse at following format instructions. Overall, the fraction for format following errors is low (around 0.1), with the Qwen3-8B being an exception. We find that the error here actually comes from the model trying to cheat, and do the entire summation \textit{inside} the \texttt{<answer>} tags (For example, \texttt{<answer>39 + 51 = 90</answer>}). This is explicitly forbidden as we do not allow chain-of-thought or thinking in this experiment, and thus we count this as an error. Gemma3 4B fails at later turns due to context length limitations; however, that does not affect any results, as its accuracies drop much earlier.

\begin{figure}[t]
    \centering
    \includegraphics[width=\linewidth]{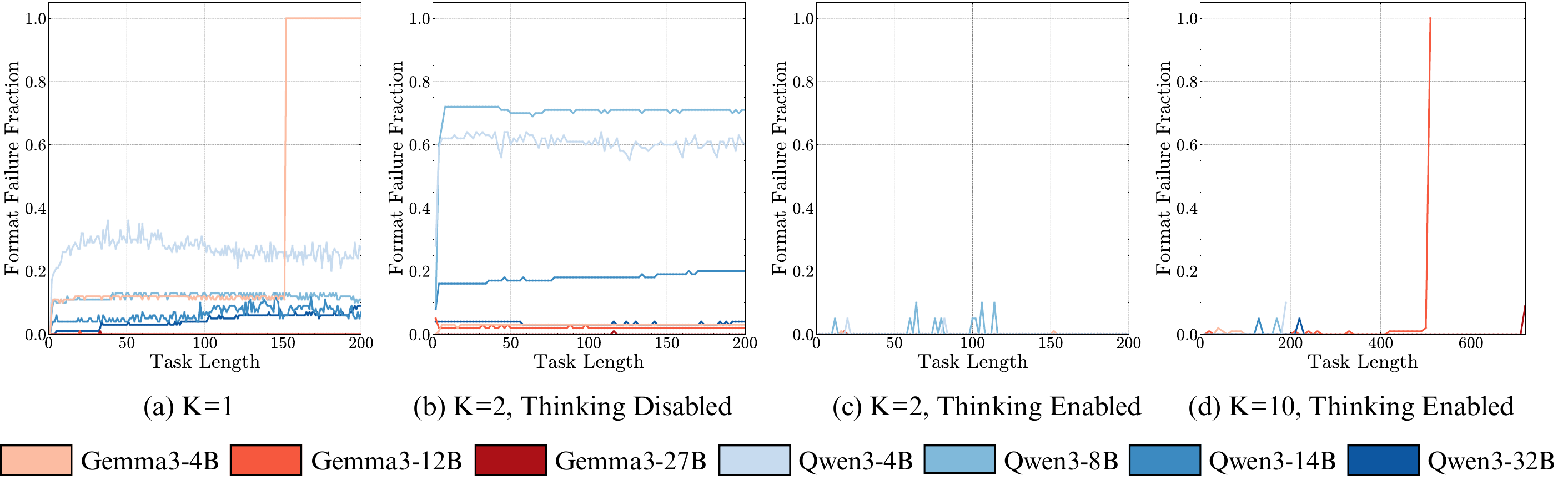}
    \caption{\textbf{Format following is not the primary mode of failure.} We analyze the fraction of errors attributed to incorrect format following for the experiments presented in \Cref{sec:experiments}. Overall, format adherence is high and not the primary source of execution errors.}
    \label{fig:format-errs}
\end{figure}

For the experiments presented in \Cref{fig:format-errs}, we find the Qwen3 family to be prone to format following errors in the case where we have thinking disabled for $K=2$. We again find this to be the consequence of models trying to cheat and use extra tokens for computation inside the answer tags. This is fixed by enabling thinking. Following this, the errors in format become negligible. At $K=10$, we see Gemma3 12B sharply rise to a format failure fraction of 1.0, again due to a full context window.

\section{Chain-of-Thought Self-Conditioning}
\label{sec:cot-selfcond}

While our self-conditioning analysis provides clear insights for thinking models, extending this to models using Chain-of-Thought (CoT) presents some significant methodological challenges.

First, a fundamental prerequisite for reliable CoT reasoning is the inclusion of prior CoT traces in the context history. As we observed with the Gemma3 models, they often condition on the format of the context; if prior turns lack CoT traces, the models cease to generate them, even when explicitly instructed to do so. Consequently, this experiment for CoT must include the full reasoning trace for every preceding turn. This requirement immediately makes the setup practically infeasible, as the verbose nature of CoT traces would rapidly exhaust the context window limits of even frontier models.

Second, even if context length were not a constraint, the process of injecting controlled errors into CoT histories is not straightforward. A naive approach of only altering the final answer while preserving the original, correct CoT trace creates an unfaithful history. When conditioned on a history where reasoning and conclusions are contradictory, the model is no longer being tested on its execution reliability but on how it resolves inconsistency---it might learn to distrust its own reasoning, introducing a confounding variable.

\begin{figure}[h]
    \centering
    \includegraphics[width=\linewidth]{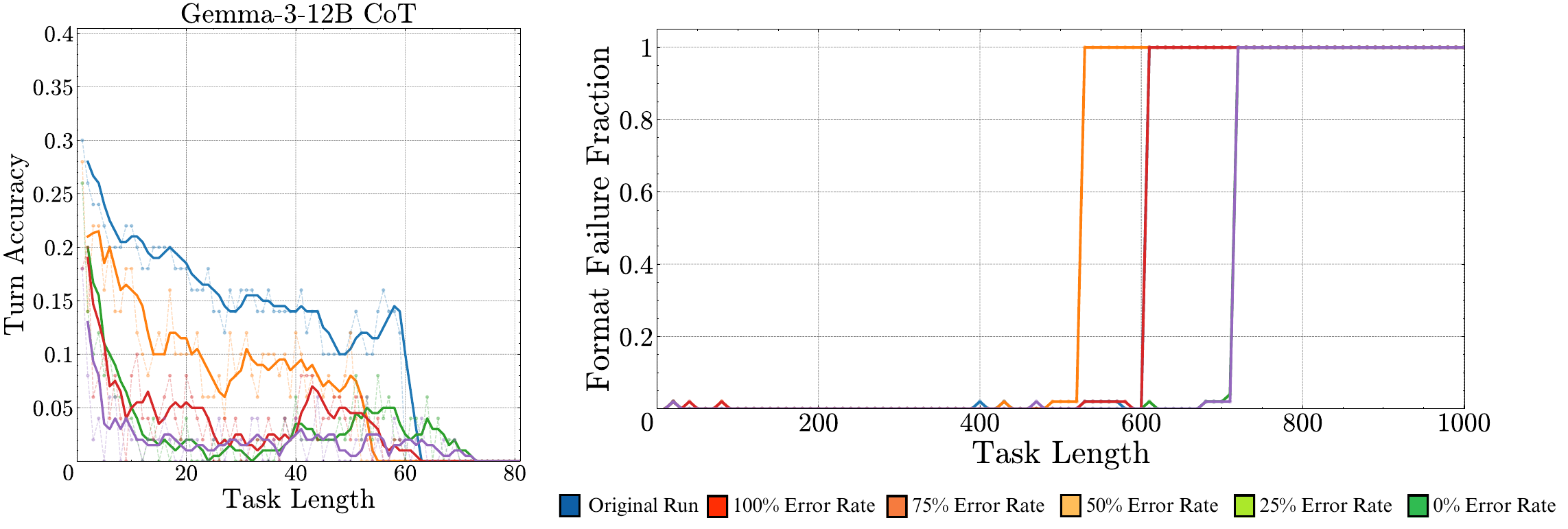}
    \caption{\textbf{CoT does not fix self-conditioning.} We observe that even with programmatically generated CoT history, the Gemma3 models cannot mitigate self-conditioning.}
    \label{fig:gemma-cot}
\end{figure}

The alternative is to programmatically generate flawed CoT traces. We implemented and experimented with this, and as seen in \Cref{fig:gemma-cot}, CoT does not mitigate the self-conditioning effect. However, this setup also introduces its own complexities. For our simple task, there are multiple distinct points of failure within a single trace: an error in the retrieval step (looking up an incorrect value) or an error in the composition step (an arithmetic mistake). A controlled experiment would need to systematically manage the type, frequency, and location of these injected errors, making the setup intractable. Even establishing a ``perfectly correct'' (Induced Error Rate = 0.00) baseline history is problematic. A model might have a CoT trace with flawed reasoning (e.g., a minor calculation error that cancels out), which we then replace with the correct final answer. Such a history is also unfaithful. 

Given these challenges—the practical infeasibility due to context length and the difficulty of designing a faithful error injection mechanism, we limit our self-conditioning analysis to non-thinking and thinking models.

\section{Proof and Analysis of Proposition \ref{lem:horizon_length}}
\label{app:proof_lemma1}

\renewcommand{\thelemma}{\arabic{prop}}
\setcounter{prop}{0}

\begin{prop}
Assuming an independent and constant per-step accuracy $p$ and no self-correction, the horizon-length $H$ at which a model can achieve a success rate $s$ is given by:
\begin{equation*}
H_s(p) = \frac{\ln(s)}{\ln(p)}
\end{equation*}
\end{prop}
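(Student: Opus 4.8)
The plan is to model the task as a sequence of $H$ independent Bernoulli trials, each representing one step of execution, where a step succeeds with probability $p$. Under the stated assumptions---constant step accuracy $p$ and no self-correction---the model completes a task of length $H$ without error if and only if every one of the $H$ steps succeeds. By independence, the probability of this event is exactly $p^H$. This is the task accuracy at length $H$.

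Next, I would impose the defining condition for the horizon length: $H_s(p)$ is the task length at which the success probability equals the threshold $s$. Setting $p^{H} = s$ and solving for $H$ gives $H \ln(p) = \ln(s)$, hence $H = \ln(s)/\ln(p)$. Since $0 < p < 1$ and $0 < s < 1$, both logarithms are negative, so the ratio is positive, confirming $H_s(p) > 0$ as required. To recover the integer-valued version stated in the main text, I would note that task accuracy $p^i$ is nonincreasing in $i$, so the first integer step $i$ at which $p^i$ drops below $s$ is $\lceil \ln(s)/\ln(p) \rceil$; dropping the ceiling gives the continuous approximation quoted in Proposition~\ref{lem:horizon_length}.

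Finally, I would briefly record the monotonicity and asymptotic behavior that justify the qualitative claims surrounding the proposition: $H_s(p)$ is increasing in $p$ on $(0,1)$, and as $p \to 1^-$ one has $\ln(p) \sim -(1-p)$, so $H_s(p) \sim -\ln(s)/(1-p) \to \infty$, i.e.\ the horizon length diverges hyperbolically in the high-accuracy regime. This explains the sharp growth beyond $80\%$ step accuracy seen in \Cref{fig:math_plot}.

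There is essentially no hard obstacle here; the only modeling subtlety---which I would state explicitly rather than belabor---is that the ``no self-correction'' assumption is what licenses replacing the event ``final state correct'' with the conjunction ``every intermediate step correct,'' and the ``constant accuracy'' assumption is what licenses multiplying the per-step probabilities. Both are acknowledged in the main text as simplifications adopted only for this illustrative bound.
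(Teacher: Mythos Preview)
Your proposal is correct and follows essentially the same approach as the paper: model task success as the conjunction of $H$ independent step successes, obtain $p^H = s$, and solve for $H$. Your additional remarks on positivity, the ceiling, and the $p \to 1^-$ asymptotics go slightly beyond the paper's proof but match the surrounding discussion there.
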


\begin{proof}
Let $p$ be the constant probability of successfully executing a single step. Under the assumption of no self-correction, a task of length $H$ is successful only if all $H$ independent steps are executed correctly. The probability of this joint event, $P(\text{success}, H)$, is the product of the individual step probabilities:
\begin{equation*}
P(\text{success}, H) = \underbrace{p \times p \times \cdots \times p}_{H \text{ times}} = p^H
\end{equation*}
This is equivalent to the Task Accuracy at turn $H$, i.e., $\text{TA}(H) = p^H$. We define the horizon-length $H$ as the number of turns at which the probability of success equals a desired rate $s$. Therefore, we set our expression for the success probability equal to $s$:
\begin{equation*}
p^H = s
\end{equation*}

Solving for $H$,
\begin{equation*}
\ln(p^H) = \ln(s) \Rightarrow H \cdot \ln(p) = \ln(s)
\end{equation*}
\begin{equation*}
H_s(p) = \left\lceil\frac{\ln(s)}{\ln(p)}\right\rceil \approx \frac{\ln(s)}{\ln p}
\end{equation*}
This completes the proof.
\end{proof}

\subsection{Implications for Horizon Length ($H_{0.5}$)}

We can apply this general result to our specific metric, the Horizon Length ($H_{0.5}$), which is defined as the number of turns at which Task Accuracy drops to $s=0.5$,
\begin{equation*}
    H_{0.5}(p) = \left\lceil \frac{\ln(0.5)}{\ln p} \right\rceil = \left\lceil -\frac{\ln(2)}{\ln p} \right\rceil
\end{equation*}
For analysis, we use the continuous approximation:
\begin{equation*}
    H_{0.5}(p) \approx -\frac{\ln(2)}{\ln p}
\end{equation*}

\paragraph{Sensitivity to Small Changes in Step Accuracy.}
This formulation allows us to analyze the sensitivity of the horizon length to small improvements in per-step accuracy by taking the derivative with respect to $p$,
\begin{equation*}
    \frac{dH_{0.5}}{dp} = -\ln(2) \cdot \left( - \frac{1}{(\ln p)^2} \cdot \frac{1}{p} \right) = \frac{\ln 2}{p(\ln p)^2}
\end{equation*}
This implies that a small change in accuracy $\Delta p$ results in a change in horizon length $\Delta H_{0.5}$ of,
\begin{equation*}
    \Delta H_{0.5} \approx \frac{\ln 2}{p(\ln p)^2}\,\Delta p
\end{equation*}

\paragraph{Near-Perfect Accuracy Regime.}
The effect is most dramatic when accuracy is already high. For near-perfect accuracy, let $p = 1 - \varepsilon$ where $\varepsilon \ll 1$. Using the Taylor approximation $\ln(1-\varepsilon) \approx -\varepsilon$, we can simplify the expression for $H_{0.5}$,
\begin{equation*}
    H_{0.5} \approx -\frac{\ln(2)}{\ln(1-\varepsilon)} \approx -\frac{\ln(2)}{-\varepsilon} = \frac{\ln 2}{\varepsilon} = \frac{\ln 2}{1-p}
\end{equation*}
The sensitivity in this regime becomes,
\begin{equation*}
    \frac{dH_{0.5}}{dp} \approx \frac{\ln 2}{(1-p)^2} \quad \Rightarrow \quad \Delta H_{0.5} \approx \frac{\ln 2}{(1-p)^2}\,\Delta p
\end{equation*}
This demonstrates that as $p \to 1$, the improvement in horizon length for a fixed gain in step accuracy grows quadratically, highlighting the compounding benefits of scale.

\end{document}